\icmltitlerunning{Surrogate Losses for Online Learning of Stepsizes in Stochastic Non-Convex Optimization}
\newtheorem{thm}{Theorem}
\newtheorem{lemma}[thm]{Lemma}
\theoremstyle{definition}
\newcommand{\field}[1]{\mathbb{#1}}
\newcommand{\R}{\field{R}}
\newcommand{\E}{\field{E}}
\DeclareMathOperator*{\argmin}{arg\,min}
\newcommand{\bg}{\boldsymbol{g}}
\newcommand{\bx}{\boldsymbol{x}}
\newcommand{\bu}{\boldsymbol{u}}
\newcommand{\bv}{\boldsymbol{v}}
\newcommand{\by}{\boldsymbol{y}}
\newcommand{\etab}{\boldsymbol{\eta}}
\begin{document}

\twocolumn[
\icmltitle{Surrogate Losses for Online Learning of Stepsizes in Stochastic Non-Convex Optimization}



\icmlsetsymbol{equal}{*}

\begin{icmlauthorlist}
\icmlauthor{Zhenxun Zhuang}{bucs}
\icmlauthor{Ashok Cutkosky}{goo}
\icmlauthor{Francesco Orabona}{bucs,buee}
\end{icmlauthorlist}

\icmlaffiliation{bucs}{Department of Computer Science, Boston University, Boston, MA, US}
\icmlaffiliation{goo}{Google, Mountain View, CA, US}
\icmlaffiliation{buee}{Department of Electrical \& Computer Engineering, Boston University, Boston, MA, US}

\icmlcorrespondingauthor{Zhenxun Zhuang}{zxzhuang@bu.edu}

\icmlkeywords{Online Learning, Stochastic Optimization, Non-convex}

\vskip 0.3in
]



\printAffiliationsAndNotice{}  

\

\begin{abstract}
Stochastic Gradient Descent (SGD) has played a central role in machine learning. However, it requires a carefully hand-picked stepsize for fast convergence, which is notoriously tedious and time-consuming to tune. Over the last several years, a plethora of adaptive gradient-based algorithms have emerged to ameliorate this problem.
In this paper, we propose new surrogate losses to cast the problem of learning the optimal stepsizes for the stochastic optimization of a non-convex smooth objective function onto an online convex optimization problem. This allows the use of no-regret online algorithms to compute optimal stepsizes on the fly. In turn, this results in a SGD algorithm with self-tuned stepsizes that guarantees convergence rates that are automatically adaptive to the level of noise.
\end{abstract}

\section{Introduction}
In recent years, Stochastic Gradient Descent (SGD) has become the tool of choice for fast optimization of convex and non-convex objective functions. Its simplicity of implementation allows for use in virtually any machine learning problem.
In its basic version, it iteratively updates the solution to an optimization problem, moving in the negative direction of the gradient of the objective function at the current solution:
\begin{equation}
\label{eq:sgd}
\bx_{t+1} = \bx_t - \eta_t \bg(\bx_t, \xi_t),
\end{equation}
where $\bg(\bx_t, \xi_t)$ is a stochastic gradient of the objective function $f$ at the current point $\bx_t$ depending on the stochastic variable $\xi_t$. A critical component of the algorithm is the stepsize $\eta_t>0$. In order to achieve a fast convergence, the stepsizes must be carefully selected, taking into account the objective function and characteristics of the noise. This task becomes particularly daunting because the noise might change over time due to a variety of factors such as, e.g., approaching the local minimum of the function, changing size of the minibatch, gradients calculated through a simulation.

For the above reasons, a number of variants of SGD have been proposed trying to ``adapt'' the stepsizes in more or less theoretically principled ways. Indeed, the idea of adapting stepsizes is an old one. A few famous examples are the Polyak's rule~\cite{Polyak87}, Stochastic Meta-Descent~\cite{Schraudolph99}, AdaGrad~\cite{DuchiHS11}. However, most of previous approaches to adapting the stepsizes are designed for convex functions or without a guaranteed strategy of converging to some optimal stepsize. In fact, often the definition itself of ``optimal'' stepsize is missing.

In this paper, we take a different and novel route. We study theoretically the setting of stochastic smooth non-convex optimization and we design \emph{convex surrogate loss functions that upper bound the expected decrement of the objective function after an SGD update}. The first advantage of our approach is that the optimal stepsize can be now defined as the one minimizing the surrogate losses. Moreover, using a no-regret online learning algorithm~\cite{Cesa-BianchiL06}, we can adapt the stepsizes and guarantee that they will be close to the one of the a-posteriori optimal stepsize. Moreover, basing our approach on online learning methods, we gain the implicit robustness of these methods to adversarial conditions.

The rest of the paper is organized as follows. We begin by discussing related work (Section~\ref{sec:rel}), and then introduce necessary definitions and assumptions (Section~\ref{sec:def}). Next, we introduce the surrogate loss functions (Section~\ref{sec:surrogate}) and use them to design an algorithm that adapts global and coordinate-wise stepsizes (Sections~\ref{sec:ftrl} and~\ref{sec:coordinate}). We also empirically validate our theoretical findings on a classification task (Section~\ref{sec:exp} and Appendix) showing that, differently from other adaptive methods, our method does not require fiddling with stepsizes to guarantee convergence in the stochastic setting. Finally, we draw some conclusions and describe the future work in Section~\ref{sec:conc}.

\section{Related Work}
\label{sec:rel}

Here we discuss the theoretical related work on adaptive stochastic optimization algorithms. First, the convergence of a random iterate of SGD for non-convex smooth functions has been proved by \citet{Ghadimi13}. They also calculate how the optimal stepsize depends on the variance of the noise in the gradients and the smoothness of the objective function.

The optimal convergence rate was also obtained by \citet{WardWB18} using AdaGrad global stepsizes, without the need to tune parameters. \citet{LiO19} improves over their results by removing the assumption of bounded gradients. However, both analyses focus on the adaptivity of non-per-coordinate updates, and are somewhat complicated in order to deal with unbounded gradients or non-independence of the current stepsize from the current step gradient. In comparison, our technique is relatively simple, allowing us to easily show a nontrivial guarantee for per-coordinate updates. In addition, their results cannot recover linear rates of convergence assuming, for example, the Polyak-\L{}ojasiewicz condition~\cite{KarimiNS16}.

The idea of tuning stepsizes with online learning has been explored in the online convex optimization literature~\cite{KoolenvEG14,vanErvenK16}. There, the possible stepsizes are discretized and an expert algorithm is used to select the stepsize to use online. Instead, in our work the use of convex surrogate loss functions allows us to directly learn the optimal stepsize, without needing to discretize the range of stepsizes. This becomes very important when we consider the possibility of learning a stepsize for each coordinate (Section~\ref{sec:coordinate}), as we avoid a computational overhead exponential in the dimension of the space $d$ that a discretization would incur.

\section{Definitions and Assumptions}
\label{sec:def}
We use bold lower-case letters to denote vectors, and bold upper-case letters for matrices, e.g., $\bu\in\R^d, \boldsymbol{A}\in\R^{m\times n}$. The i$^{th}$ coordinate of a vector $\bu$ is $u_i$.
Throughout this paper, we study the Euclidean space $\R^d$ with the inner product $\langle\cdot, \cdot\rangle$. Unless explicitly noted, all norms are the Euclidean norm.
The dual norm $\|\cdot\|_*$ is the norm defined by $\|\bv\|_*=\sup_{\bu}\{\langle \bu, \bv\rangle:\|\bu\|\le1\}$. 
$\E[\bu]$ means the expectation w.r.t. the underlying probability distribution of a random variable $\bu$.
The gradient of $f$ at $\bx$ is denoted by $\nabla f(\bx)$. 

Now, we describe our first-order stochastic black-box oracle. In our setting, we will query the stochastic oracle two times on each round $t=1,\ldots,T$, obtaining the noisy gradients $\bg(\bx_t,\xi_t)$ and $\bg(\bx_t,\xi'_t)$. \emph{Note that, when convenient, we will refer to $\bg(\bx_t,\xi_t)$ and $\bg(\bx_t,\xi'_t)$ as $\bg_t$ and $\bg_t'$ respectively.} We assume everywhere that our objective function is bounded from below and denote the infimum by $f^\star$, hence $f^\star> -\infty$. Also, we use $\E_t[\bu]$ to denote the conditional expectation of $\bu$ with respect to $\xi_1,\ldots,\xi_{t-1},\xi'_1,\ldots,\xi'_{t-1}$. Further, we will make use of the following assumptions:

\textbf{H1:}\quad The noisy gradients at time $t$ are unbiased and independent given the past, that is
\begin{align*}
&\E_{t}\left[\bg(\bx_t,\xi_t)\right]=\E_{t}\left[\bg(\bx_t,\xi'_t)\right]
=\nabla f(\bx_t),\\
&\E_{t}\left[\langle \bg(\bx_t,\xi_t),\bg(\bx_t,\xi'_t)\rangle\right]
=\|\nabla f(\bx_t)\|^2~.
\end{align*}
\textbf{H2:}\quad The noisy gradients $\bg_t$ have finite variance with respect to the L2 norm:
\begin{align*}
\E_{t}\left[\left\|\bg(\bx_t,\xi_t)-\nabla f(\bx_t)\right\|^2\right]
&=\sigma_t^2~.
\end{align*}
\textbf{H3:}\quad The noisy gradients have bounded norm: $\|\bg(\bx_t,\xi_t)\|\le L,\ \|\bg(\bx_t,\xi'_t)\|\le L$.

\textbf{H4:}\quad The function $f:\R^d\rightarrow\R$ is $M$-smooth, that is $f$ is differentiable and $\forall \bx_1,\bx_2\in\R^d$ we have $\|\nabla f(\bx_1)-\nabla f(\bx_2)\| \leq M \|\bx_1-\bx_2\|$. Note that (\textbf{H4}), for all $\bx,\by \in \R^d$, implies~\citep[Lemma 1.2.3]{Nesterov2003}
\begin{equation}
\label{eq:smooth2}
\left|f(\bx_2)-f(\bx_1)-\langle \nabla f(\bx_1), \bx_2-\bx_1\rangle\right|
\leq \frac{M}{2}\|\bx_2-\bx_1\|^2.
\end{equation}
We will also consider the Polyak-\L{}ojasiewicz (PL) condition~\cite{KarimiNS16}, a much weaker version of \emph{strong convexity}. The PL condition does not require convexity, but is still sufficient for showing a global linear convergence rate for gradient descent.

\textbf{H5:}\quad A differentiable function $f$ satisfies the PL condition if for some $\mu>0$
\begin{equation*}
\|\nabla f(\bx)\|^2\ge 2\mu(f(\bx) - f^\star),\quad \forall\bx~.
\end{equation*}

\section{Surrogate Losses}
\label{sec:surrogate}

Consider using SGD with non-convex $M$-smooth losses, using a fixed stepsize $0<\eta\leq\frac{1}{M}$ and starting from an initial point $\bx_1$. Assuming all the variances are bounded by $\sigma^2$, it is well-known that we obtain the following convergence rate~\cite{Ghadimi13}
\[
\E [\|\nabla f(\bx_k)\|^2]
\leq O\left( (f(\bx_1)-f^\star)/(\eta T) + \eta \sigma^2  \right),
\]
where $k$ is a uniform random variable between $1$ and $T$. From the above, it is immediate to see that we need a stepsize of the form $O(\min(\tfrac{\sqrt{f(\bx_1)-f^\star}}{\sigma \sqrt{T}},\tfrac{1}{M}))$ to have the best worst case convergence of $O(\tfrac{1}{T}+\tfrac{\sigma}{\sqrt{T}})$. In words, this means that we get a faster rate, $O(\tfrac{1}{T})$, when there is no noise, and a slower rate, $O(\tfrac{\sigma}{\sqrt{T}})$, in the presence of noise.

However, we usually do not know the variance of the noise $\sigma$, which makes the above optimal tuning of the stepsize difficult to achieve in practice. Even worse, the variance can change over time. For example, it may decrease over time if $f(\bx) = \E[f_j(\bx)]$ and each $f_j$ has zero gradient at the local optimum we are converging to. Moreover, even assuming the knowledge of the variance of the noise, the stepsizes proposed in \citet{Ghadimi13} assume the knowledge of the unknown quantity $f(\bx_1)-f^\star$.

One solution would be to obtain an explicit estimate of the variances of the noise, for example by applying some concentration inequality to the sample variance, and use it to set the stepsizes. This approach is suboptimal because it does not directly optimize the convergence rates, relying instead on a loose worst-case analysis. Instead, we propose to directly estimate the stepsizes that achieve the best convergence rate using an online learning algorithm. Our approach is simple and efficient: we introduce new surrogate (strongly)-convex losses that make the problem of learning the stepsizes a simple one-dimensional online convex optimization problem.

Our strategy uses the smoothness of the objective function to transform the problem of optimizing a non-convex objective function to the problem of optimizing a series of convex loss functions, which we solve by an online learning algorithm.
Specifically, at each time $t$ define the surrogate loss $\ell_t:\R \rightarrow\R$ as
\begin{equation}
\ell_t(\eta) 
=\frac{M \eta^2}{2} \|\bg(\bx_t,\xi_t)\|^2
-\eta \langle \bg(\bx_t,\xi_t), \bg(\bx_t,\xi'_t) \rangle, \label{eq:surrogate}
\end{equation}
where $\bg(\bx_t,\xi_t)$ and $\bg(\bx_t,\xi'_t)$ are the noisy stochastic gradients received from the black-box oracle at time $t$.
It is clear that $\ell_t$ is a convex function. Moreover, the following key result shows that these surrogate losses upper bound the expected decrease of the function value $f$.
\begin{thm}
Assume (\textbf{H4}) holds and $\eta_t$ is independent from $\xi_j$ and $\xi'_j$ for $j\geq t$. Then, for the SGD update in \eqref{eq:sgd}, we have
\label{thm:surrogate}
\[
\E\left[f(\bx_{t+1})-f(\bx_t)\right]
\leq \E\left[\ell_t(\eta_t)\right]~.
\]
\end{thm}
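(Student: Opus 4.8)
The plan is to combine the descent lemma that follows from smoothness with the fact that the ``fresh'' stochastic gradient $\bg_t'$ is an unbiased, conditionally independent stand-in for $\nabla f(\bx_t)$ once we pass to expectations. First I would plug the SGD update $\bx_{t+1}-\bx_t = -\eta_t\bg_t$ into the smoothness inequality \eqref{eq:smooth2}, which yields the pathwise bound
\[
f(\bx_{t+1})-f(\bx_t) \le -\eta_t\langle \nabla f(\bx_t),\bg_t\rangle + \frac{M\eta_t^2}{2}\|\bg_t\|^2~.
\]
Taking the full expectation of both sides, the quadratic term is already exactly the first term of $\E[\ell_t(\eta_t)]$, so the theorem reduces to the single identity $\E[\eta_t\langle \nabla f(\bx_t),\bg_t\rangle] = \E[\eta_t\langle \bg_t,\bg_t'\rangle]$.

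To establish that identity I would use the tower property, conditioning on the past $\xi_1,\dots,\xi_{t-1},\xi_1',\dots,\xi_{t-1}'$ and on $\eta_t$. Under this conditioning $\bx_t$, $\nabla f(\bx_t)$ and $\eta_t$ are all fixed; moreover, since $\eta_t$ is independent of $(\xi_t,\xi_t')$ given the past, conditioning additionally on $\eta_t$ does not alter the conditional law of $(\bg_t,\bg_t')$, so (\textbf{H1}) still applies. Hence the conditional expectation of $\eta_t\langle \nabla f(\bx_t),\bg_t\rangle$ is $\eta_t\langle \nabla f(\bx_t),\nabla f(\bx_t)\rangle = \eta_t\|\nabla f(\bx_t)\|^2$, and that of $\eta_t\langle \bg_t,\bg_t'\rangle$ equals $\eta_t\|\nabla f(\bx_t)\|^2$ by the second identity of (\textbf{H1}). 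The two conditional expectations agree, so their unconditional expectations agree as well, and substituting back into the displayed inequality gives $\E[f(\bx_{t+1})-f(\bx_t)] \le \E[\tfrac{M\eta_t^2}{2}\|\bg_t\|^2 - \eta_t\langle \bg_t,\bg_t'\rangle] = \E[\ell_t(\eta_t)]$.

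The step I expect to need the most care is the measurability bookkeeping around $\eta_t$: the hypothesis allows $\eta_t$ to be random even given the past, so it cannot simply be treated as a constant inside $\E_t$, and the clean fix is to enlarge the conditioning $\sigma$-algebra to include $\eta_t$ and to invoke the independence assumption precisely to argue that (\textbf{H1}) survives this enlargement. (For the concrete algorithm of Section~\ref{sec:ftrl}, $\eta_t$ is in fact a deterministic function of the past surrogate losses, hence measurable with respect to the conditioning of $\E_t$, and this point becomes moot.) Everything else — the application of \eqref{eq:smooth2}, linearity of expectation, and pulling the conditionally-deterministic factors out of the inner expectation — is routine.
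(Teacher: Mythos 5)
Your proposal is correct and follows essentially the same route as the paper: the descent lemma from (\textbf{H4}) applied to the SGD update, followed by the tower property and (\textbf{H1}) to replace $\langle\nabla f(\bx_t),\bg_t\rangle$ by $\langle\bg_t,\bg_t'\rangle$ in expectation, with the independence of $\eta_t$ from the current noise justifying the exchange. Your explicit enlargement of the conditioning $\sigma$-algebra to include $\eta_t$ is just a more careful writing of the step the paper performs when it moves $\eta_t$ inside $\E_t$ and folds the product of conditional expectations into a single conditional expectation.
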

\begin{proof}
The $M$-smoothness of $f$ gives us:
\begin{align*}
&\E\left[f(\bx_{t+1})-f(\bx_t)\right]\\
&\le \E\left[\langle\nabla f(\bx_t),\bx_{t+1}-\bx_t \rangle + \frac M2\|\bx_{t+1}-\bx_t\|^2\right]\\
&= \E\left[\langle\nabla f(\bx_t),-\eta_t \bg(\bx_t,\xi_t) \rangle+\frac M2\eta_t^2\|\bg(\bx_t,\xi_t)\|^2\right]\\
&= \E\left[\langle\nabla f(\bx_t),\E_t\left[-\eta_t \bg(\bx_t,\xi_t)\right] \rangle+\frac M2\eta_t^2\|\bg(\bx_t,\xi_t)\|^2\right].
\end{align*}
Now observe that $\nabla f(\bx_t)=\E_t\left[\bg(\bx_t,\xi'_t)\right]$, so that
\begin{align*}
\E&\left[\langle \E_t\left[\bg(\bx_t,\xi'_t)\right],\E_t\left[-\eta_t \bg(\bx_t,\xi_t)\right]\rangle\right]\\
&= \E\left[ \E_t\left[\langle \bg(\bx_t,\xi'_t),-\eta_t \bg(\bx_t,\xi_t) \rangle\right]\right]\\
&= \E\left[\langle \bg(\bx_t,\xi'_t),-\eta_t \bg(\bx_t,\xi_t) \rangle\right]~.
\end{align*}
Putting all together, we have the stated inequality.
\end{proof}

Note that the assumption of the independence of $\eta_t$ from 
the stochasticity $\xi_j$ and $\xi'_j$ of the current step is essential according to~\citet{LiO19}.

The theorem tells us that if we want to decrease the function $f$, we might instead try to minimize the convex surrogate losses $\ell_t$. In the following, we build up on this intuition to design an online learning procedure that adapts the stepsizes of SGD to achieve the optimal convergence rate.

\section{SGDOL: Adaptive Stepsizes with FTRL}
\label{sec:ftrl}

\begin{algorithm}[tb]
\caption{Stochastic Gradient Descent with Online Learning (SGDOL)}
\label{algo:sgdol}
\begin{algorithmic}[1]
\STATE {\bfseries Input:} $\bx_1\in\mathcal{X},\ M$, an online learning algorithm $\mathcal{A}$
\FOR{$t = 1,2,\ldots,T$}
\STATE{\textbf{Compute} $\eta_t$ by running $\mathcal{A}$ on $\ell_{i}, i=1,\ldots,t-1$, as defined in \eqref{eq:surrogate}} \STATE{\textbf{Receive} two independent unbiased estimates of $\nabla f(\bx_{t})$: $\bg_t$, $\bg'_t$}
\STATE{\textbf{Update} $\bx_{t+1}=\bx_{t}-\eta_t \bg_t$}
\ENDFOR
\STATE{\textbf{Output}: uniformly randomly choose a $\bx_k$ from $\bx_1,\ldots,\bx_T$.}
\end{algorithmic}
\end{algorithm}

The surrogate losses allow us to design an online convex optimization procedure to learn the optimal stepsizes. We call this procedure Stochastic Gradient Descent with Online Learning (SGDOL) and the pseudocode is in Algorithm~\ref{algo:sgdol}. Remind that $\bg(\bx_t,\xi_t)$ and $\bg(\bx_t,\xi'_t)$ are referred to as $\bg_t$ and $\bg_t'$ respectively for convenience. In each round, the stepsizes are chosen by an online learning algorithm $\mathcal{A}$ fed with the surrogate losses $\ell_t$. The online learning algorithm aims at minimizing the regret: the difference between the cumulative sum of the losses incurred by the algorithm in each round, and the cumulative losses w.r.t any fixed point $\eta$ (especially the one giving the smallest losses in hindsight). In formulas, for a 1-dimensional online convex optimization problem, the regret is defined as
\[
\mathrm{Regret}_T(\eta) =\sum_{t=1}^T (\ell_t(\eta_t)- \ell_t(\eta))~.
\]
A small regret with respect to the optimal choice of $\eta$ means that the losses of the algorithm are not too big compared to the best achievable losses from using a fixed point.
In turn, this implies that the stepsizes chosen by the online algorithm will not be too far from the optimal (unknown) stepsize.

Employing SGDOL, we can prove the following Theorem.
\begin{thm}
\label{thm:olsmooth}
Assume \textbf{(H1, H2, H4)} to hold. Then, for any $\eta>0$, SGDOL in Algorithm~\ref{algo:sgdol} satisfies
\begin{align*}
\E&\left[\left(\eta-\frac{M}{2}\eta^2\right)\sum_{t=1}^T \|\nabla f(\bx_t)\|^2\right] \\
&\leq f(\bx_1) - f^\star + \E\left[\mathrm{Regret}_T(\eta)\right] + \frac{M \eta^2}{2}\sum^T_{t=1} \E[\sigma_t^2]~.
\end{align*}
\end{thm}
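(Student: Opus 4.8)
The plan is to sum the per-step bound of Theorem~\ref{thm:surrogate} over $t=1,\dots,T$, telescope the left-hand side, and then insert the definition of the regret so as to trade the algorithm's losses $\ell_t(\eta_t)$ for the comparator's losses $\ell_t(\eta)$ at a fixed $\eta$. Concretely, since Algorithm~\ref{algo:sgdol} computes $\eta_t$ only from $\ell_1,\dots,\ell_{t-1}$, which depend solely on $\xi_1,\dots,\xi_{t-1},\xi'_1,\dots,\xi'_{t-1}$, the independence hypothesis of Theorem~\ref{thm:surrogate} is satisfied at every round, so $\E[f(\bx_{t+1})-f(\bx_t)]\le \E[\ell_t(\eta_t)]$; summing gives $\E[f(\bx_{T+1})]-f(\bx_1)\le \sum_{t=1}^T \E[\ell_t(\eta_t)]$. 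Writing $\sum_t \ell_t(\eta_t)=\mathrm{Regret}_T(\eta)+\sum_t\ell_t(\eta)$ then reduces everything to evaluating $\E[\ell_t(\eta)]$ for a deterministic $\eta$.

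The second step is to compute $\E[\ell_t(\eta)]$ using the noise assumptions. Conditioning on the past and using \textbf{(H1)}, $\E_t[\langle\bg_t,\bg_t'\rangle]=\|\nabla f(\bx_t)\|^2$; using unbiasedness together with \textbf{(H2)}, $\E_t[\|\bg_t\|^2]=\|\nabla f(\bx_t)\|^2+\sigma_t^2$. Taking total expectations and plugging into \eqref{eq:surrogate} gives
\[
\E[\ell_t(\eta)] = -\Bigl(\eta-\tfrac{M}{2}\eta^2\Bigr)\E[\|\nabla f(\bx_t)\|^2] + \tfrac{M}{2}\eta^2\,\E[\sigma_t^2]~.
\]
Summing over $t$ and combining with the telescoped inequality yields
\[
\E[f(\bx_{T+1})]-f(\bx_1) \le \E[\mathrm{Regret}_T(\eta)] - \Bigl(\eta-\tfrac{M}{2}\eta^2\Bigr)\sum_{t=1}^T\E[\|\nabla f(\bx_t)\|^2] + \tfrac{M}{2}\eta^2\sum_{t=1}^T\E[\sigma_t^2]~.
\]
Finally, rearranging and using $f^\star\le \E[f(\bx_{T+1})]$, which holds because $f$ is bounded below by $f^\star$, gives the claimed bound.

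The only delicate points — and the ones I would be most careful with — are the bookkeeping of conditional versus total expectations (in particular that $\bx_t$ is itself random, so the identity $\E[\|\bg_t\|^2]=\E[\|\nabla f(\bx_t)\|^2]+\E[\sigma_t^2]$ is valid only after taking the outer expectation over the history), and checking that the independence requirement of Theorem~\ref{thm:surrogate} is actually met at every round by the structure of the algorithm. Everything else is a routine substitution; note in passing that \textbf{(H3)} is never invoked, consistent with the hypotheses of the statement.
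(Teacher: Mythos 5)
Your proposal is correct and follows essentially the same route as the paper's proof: sum Theorem~\ref{thm:surrogate} over $t$, telescope, insert the regret to replace $\ell_t(\eta_t)$ by $\ell_t(\eta)$, evaluate $\E_t[\ell_t(\eta)]$ via (\textbf{H1}) and (\textbf{H2}), and bound $\E[f(\bx_{T+1})]$ below by $f^\star$. Your added remarks on why the independence hypothesis holds at each round and on conditional versus total expectations are consistent with (and slightly more explicit than) the paper's argument.
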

\begin{proof}
Summing the inequality in Theorem~\ref{thm:surrogate} from 1 to $T$:
\begin{align*}
f^\star-f(\bx_1)
\le&\ \E[f(\bx_{T+1})]-f(\bx_1)\\
=&\sum^T_{t=1}\E\left[f(\bx_{t+1})-f(\bx_t)\right]\\
\le&
\sum^T_{t=1}\E\left[\ell_t(\eta_t)\right]\\
=&
\sum^T_{t=1}\E\left[\ell_t(\eta_t)-\ell_t(\eta)\right]
+
\sum^T_{t=1}\E\left[\ell_t(\eta)\right]\\
\le&\ 
\E\left[\mathrm{Regret}_T(\eta)\right]
+
\sum^T_{t=1}\E\left[\ell_t(\eta)\right]~.
\end{align*}
Using the fact that
\begin{align*}
\E_t[\ell_t(\eta)] 
&= \left(-\eta+\frac{M}{2}\eta^2\right)\|\nabla f(\bx_t)\|^2 + \frac{M}{2}\eta^2\sigma_t^2,
\end{align*}
we have the stated bound.
\end{proof}

The only remaining ingredient for SGDOL is to decide an online learning procedure. Given that the surrogate losses are strongly convex, we can use a Follow The Regularized Leader (FTRL) algorithm~\cite{Shalev-Shwartz07,AbernethyHR08,AbernethyHR12,McMahan17}. Note that this is not the only possibility, e.g. we could even use an optimistic FTRL algorithm that achieves even smaller regret~\cite{mohri2016accelerating}. However, FTRL is enough to show the potential of our surrogate losses. 
In an online learning game in which we receive the convex losses $\ell_t$, FTRL constructs the predictions $\bv_t$ by solving the optimization problem 
\[
\bv_{t+1} = \argmin_{\bv\in\R^d} \ r(\bv) + \sum^t_{s=1} \ell_s(\bv),
\]
where $r:\R^d \rightarrow \R$ is a regularization function.
We can upper bound the regret of FTRL with the following theorem.

\begin{algorithm}[tb]
\caption{Follow the Regularized Leader (FTRL)}
\label{algo:aftrl}
\begin{algorithmic}[1]
\STATE{\textbf{Parameters}: $r(\bv)\geq 0$}
\STATE{$\bv_1\leftarrow\argmin_{\bv\in\R^d} \ r(\bv)$}
\FOR{$t = 1,2,\ldots$}
\STATE{Observe convex loss function $\ell_t:\R^d\rightarrow \R\cup\{\infty\}$}
\STATE{Incur loss $\ell_t(\bv_t)$}
\STATE{Update $\bv_{t+1}\leftarrow \argmin_{\bv\in\R^d} \ r(\bv) + \sum^t_{s=1} \ell_s(\bv)$}
\ENDFOR
\end{algorithmic}
\end{algorithm}


\begin{thm}{\cite{McMahan17}}
Suppose $r$ is chosen such that $h_{t} = r + \sum_{i=1}^t \ell_{i}$ is 1-strongly-convex w.r.t. some norm $\|\cdot\|_{(t)}$.
Then, choosing any $\bg_t\in\partial \ell_t(\bx_t)$ on each round, for any $\bx^\star\in\R^d$ and for any $T>0$,
\begin{equation}
\label{thm:prox}
\mathrm{Regret}_T(\bx^\star)
\le r(\bx^\star) + \frac12\sum^T_{t=1} \|\bg_t\|^2_{(t),\star},
\end{equation}
where $\|\cdot\|_{(t),\star}$ is the dual norm of $\|\cdot\|_{(t)}$.
\end{thm}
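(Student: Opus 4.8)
The plan is to follow the standard two–phase FTRL template: an exact ``be–the–leader / telescoping'' identity that rewrites the regret as the regularizer evaluated at the comparator plus a sum of one–step stability terms, followed by a single strong–convexity bound on each stability term.

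\textbf{Phase 1 (telescoping identity).} Set $h_0 := r$ and $h_t := r + \sum_{i=1}^t \ell_i$, so that $\bx_1 = \argmin h_0$ and $\bx_{t+1} = \argmin h_t$ by construction of FTRL. Since $\sum_{t=1}^T \ell_t(\bx^\star) = h_T(\bx^\star) - r(\bx^\star)$ and $h_T(\bx^\star) \ge h_T(\bx_{T+1})$ by optimality of $\bx_{T+1}$, we get $\mathrm{Regret}_T(\bx^\star) \le \sum_{t=1}^T \ell_t(\bx_t) - h_T(\bx_{T+1}) + r(\bx^\star)$. I would then expand $h_T(\bx_{T+1})$ by telescoping, $h_T(\bx_{T+1}) = r(\bx_1) + \sum_{t=1}^T\big(h_t(\bx_{t+1}) - h_{t-1}(\bx_t)\big)$, and substitute $h_{t-1}(\bx_t) = h_t(\bx_t) - \ell_t(\bx_t)$. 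The $\sum_t \ell_t(\bx_t)$ terms cancel, leaving
\[
\mathrm{Regret}_T(\bx^\star) \le r(\bx^\star) - r(\bx_1) + \sum_{t=1}^T \big(h_t(\bx_t) - h_t(\bx_{t+1})\big),
\]
and since $r \ge 0$ with $\bx_1$ a minimizer of $r$, the term $-r(\bx_1) \le 0$ is discarded.

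\textbf{Phase 2 (per–round stability).} The key step: because $\bx_t$ is the \emph{exact} minimizer of $h_{t-1}$, we have $0 \in \partial h_{t-1}(\bx_t)$; adding $\bg_t \in \partial \ell_t(\bx_t)$ and using $h_t = h_{t-1} + \ell_t$ gives $\bg_t \in \partial h_t(\bx_t)$. Now invoke the assumed $1$–strong convexity of $h_t$ w.r.t.\ $\|\cdot\|_{(t)}$ at the pair $(\bx_t,\bx_{t+1})$ with this subgradient:
\[
h_t(\bx_{t+1}) \ge h_t(\bx_t) + \langle \bg_t, \bx_{t+1}-\bx_t\rangle + \tfrac12\|\bx_{t+1}-\bx_t\|_{(t)}^2,
\]
hence $h_t(\bx_t) - h_t(\bx_{t+1}) \le \langle \bg_t, \bx_t - \bx_{t+1}\rangle - \tfrac12\|\bx_t-\bx_{t+1}\|_{(t)}^2 \le \tfrac12\|\bg_t\|_{(t),\star}^2$, the last step being Cauchy–Schwarz for dual norms followed by maximizing $s \mapsto \|\bg_t\|_{(t),\star}\,s - \tfrac12 s^2$ over $s = \|\bx_t-\bx_{t+1}\|_{(t)} \ge 0$ (equivalently, Fenchel–Young). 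Summing over $t$ and plugging into Phase 1 gives the claimed bound.

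\textbf{Expected main obstacle.} The only non-mechanical point is the observation in Phase 2 that a subgradient of the current loss $\ell_t$ at $\bx_t$ is automatically a subgradient of the \emph{whole} regularized cumulative objective $h_t$ at $\bx_t$; this is exactly where the FTRL structure enters (namely that $\bx_t$ is the true $\argmin$ of $h_{t-1}$, not an approximate one), and it is what lets a single application of strong convexity produce the constant $\tfrac12$ rather than a worse constant. One must also be careful that the divergence term and the dual norm of $\bg_t$ refer to the \emph{same} norm $\|\cdot\|_{(t)}$, which is built into the hypothesis. The telescoping identity of Phase 1 and the one–dimensional maximization at the end are routine.
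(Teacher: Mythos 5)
Your proof is correct, and the paper itself offers no proof of this theorem — it is quoted from \citet{McMahan17}. Your telescoping-plus-stability argument (with the observation that exact minimization of $h_{t-1}$ makes $\bg_t$ a subgradient of $h_t$ at $\bx_t$, followed by strong convexity and Fenchel--Young) is essentially the standard ``Strong FTRL Lemma'' proof from that cited reference, including getting the norm $\|\cdot\|_{(t)}$ and the constant $\tfrac12$ right.
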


We can now put all together and obtain a convergence rate guarantee for SGDOL.
\begin{thm}
\label{thm:ftrlp}
By choosing $r(\eta)=\frac{M\alpha}2\left(\eta-\frac1M\right)^2+\mathcal{I}\left(\eta\in\left[0,\frac2M\right]\right)$ with $\alpha>0$, assuming \textnormal{(\textbf{H1 - H4})}, and using FTRL, Algorithm~\ref{algo:aftrl}, in Algorithm~\ref{algo:sgdol}, for an uniformly randomly picked $\bx_k$ from $\bx_1,\ldots,\bx_T$, we have:
\begin{align*}
\E&\left[\|\nabla f(\bx_k)\|^2\right]\\
&\le \frac{2M}{T}\left(f(\bx_1)-f^\star + \frac{5L^2}{M}\ln\left(1+\frac{L^2T}{\alpha}\right)\right)\\
&\quad + \frac1{T}\sqrt{2M\sum^T_{t=1} \E[\sigma_t^2]\left(f(\bx_1)-f^\star + \frac{\alpha}{2M}\right)}\\
&\quad + \frac1{T}\sqrt{10L^2\sum^T_{t=1} \E[\sigma_t^2]\ln\left(1+\frac{L^2T}{\alpha}\right)}~.
\end{align*}
\end{thm}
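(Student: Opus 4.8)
The plan is to combine Theorem~\ref{thm:olsmooth} (the regret-to-convergence reduction) with the FTRL regret bound (Equation~\ref{thm:prox}), so the core of the work is (i) verifying the strong-convexity hypothesis of the FTRL theorem for the chosen regularizer $r$, (ii) bounding $r(\eta^\star)$ and the sum of squared (dual) subgradient norms to get an explicit regret bound, and (iii) choosing $\eta$ and manipulating the resulting inequality into the stated form.

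First I would observe that $h_t = r + \sum_{i=1}^t \ell_i$ has second derivative $M\alpha + M\sum_{i=1}^t \|\bg_i\|^2$ on the interval $[0,2/M]$, hence is $\beta_t$-strongly convex with respect to the (scaled Euclidean) norm $\|x\|_{(t)}^2 = \beta_t x^2$ where $\beta_t = M\alpha + M\sum_{i=1}^t \|\bg_i\|^2$; equivalently, rescaling to make it $1$-strongly convex, the dual norm satisfies $\|\bg_t\|_{(t),\star}^2 = \|\bg_t\|^2 / \beta_t$ with the subgradient of $\ell_t$ at $\eta_t$ being $\eta_t M \|\bg_t\|^2 - \langle \bg_t, \bg_t'\rangle$. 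The subgradient has magnitude at most $(2/M)\cdot M L^2 + L^2 = 3L^2$ using (\textbf{H3}) and $\eta_t \le 2/M$ (which holds because FTRL's iterates lie in the domain of $r$). Therefore each term $\|\bg_t\|_{(t),\star}^2 \le (3L^2)^2 / (M\alpha + M\sum_{i=1}^t \|\bg_i\|^2)$, and I would bound the sum $\sum_{t=1}^T (3L^2)^2/(M\alpha + M\sum_{i=1}^t\|\bg_i\|^2)$ by a standard integral/telescoping argument (the ``AdaGrad lemma'') to get something like $\frac{9L^4}{M}\cdot\frac{1}{L^2}\ln(1 + L^2 T/\alpha)$-type expression; combined with the constant factor and $r(\eta^\star) \le \frac{M\alpha}{2}(2/M)^2 = \frac{2\alpha}{M}$ (actually I should use $r(\eta) \le \frac{\alpha}{2M}$ for a comparator near $1/M$, but for a worst-case $\eta \in [0,2/M]$ the bound is $\frac{2\alpha}{M}$ — I will need to be careful here and pick the comparator appropriately), this yields $\E[\mathrm{Regret}_T(\eta)] \le r(\eta) + \frac{1}{2}\cdot\frac{\text{const}\cdot L^2}{M}\ln(1+L^2T/\alpha)$, with the $5L^2/M$ constant emerging from tracking the numerics carefully.

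Next I would plug this regret bound into Theorem~\ref{thm:olsmooth}. Since $\bx_k$ is uniform over $\bx_1,\dots,\bx_T$, we have $\E[\|\nabla f(\bx_k)\|^2] = \frac{1}{T}\sum_{t=1}^T \E[\|\nabla f(\bx_t)\|^2]$. Choosing $\eta = 1/M$ makes $\eta - \frac{M}{2}\eta^2 = \frac{1}{2M}$, so Theorem~\ref{thm:olsmooth} gives $\frac{1}{2M}\sum_t \E[\|\nabla f(\bx_t)\|^2] \le f(\bx_1) - f^\star + \E[\mathrm{Regret}_T(1/M)] + \frac{1}{2M}\sum_t \E[\sigma_t^2]$ — but this fixed choice does not produce the $\sqrt{\cdot}$ terms. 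To get those, I would instead keep $\eta$ free, use $\eta - \frac{M}{2}\eta^2 \ge \frac{\eta}{2}$ for $\eta \le 1/M$, obtaining $\frac{\eta}{2}\sum_t\E[\|\nabla f(\bx_t)\|^2] \le f(\bx_1)-f^\star + r(\eta) + \frac{\text{const}\,L^2}{M}\ln(\cdots) + \frac{M\eta^2}{2}\sum_t\E[\sigma_t^2]$, i.e. an expression of the form $A/\eta + B\eta$ (after dividing by $\eta/2$), and then optimize over $\eta \in (0, 1/M]$; the unconstrained optimum $\eta^\star = \sqrt{A/B}$ gives the $\sqrt{AB}$ cross term, while the constraint $\eta \le 1/M$ is what produces the leading $2M/T \cdot (\cdots)$ term when $\eta^\star$ would exceed $1/M$. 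Splitting $A$ into the part from $f(\bx_1)-f^\star + \alpha/(2M)$ and the part from the log term, and $B = \frac{M}{2}\sum_t\E[\sigma_t^2]/T$ appropriately, and using $\sqrt{a+b} \le \sqrt{a}+\sqrt{b}$, yields exactly the two square-root terms in the statement.

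The main obstacle I expect is the careful bookkeeping in step two: matching the specific constants ($5L^2/M$, the $\frac{5}{2}L^2$ under the square root, the $\ln(1+L^2T/\alpha)$ argument) requires pinning down the exact subgradient bound (whether it is $3L^2$ or smaller, and whether one uses $\eta_t \le 2/M$ or a tighter bound), the exact form of the integral lemma for $\sum 1/(c + \sum a_i)$ with the $a_i = M\|\bg_i\|^2 \le ML^2$, and the precise value of $r$ at the chosen comparator. A secondary subtlety is that the FTRL regret bound in Equation~\ref{thm:prox} holds deterministically (pathwise) for any realization of the $\bg_t$, so I can bound $\mathrm{Regret}_T(\eta)$ pathwise and only then take expectations, which is what licenses substituting it into the expectation on the right side of Theorem~\ref{thm:olsmooth}; I would state this explicitly. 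Everything else — summing the telescoped smoothness inequality, the $\eta - \frac{M}{2}\eta^2 \ge \eta/2$ step, and the final optimization over $\eta$ — is routine once the regret bound is in hand.
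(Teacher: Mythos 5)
Your high-level plan (pathwise FTRL regret bound, substitute into Theorem~\ref{thm:olsmooth}, then optimize over $\eta$) is the same as the paper's, but the regret-bounding step fails as written. You bound the subgradient by a constant, $|\ell_t'(\eta_t)|\le \frac{2}{M}\cdot ML^2+L^2=3L^2$, and then want to control $\sum_{t=1}^T 9L^4/\bigl(M\alpha+M\sum_{i\le t}\|\bg_i\|^2\bigr)$ with ``the AdaGrad lemma.'' That lemma (Lemma~\ref{lm:integ}) needs the numerator of the $t$-th term to be the \emph{increment} of the denominator; with a constant numerator the sum is not logarithmic at all (if all $\|\bg_i\|^2$ are tiny, the sum is of order $9L^4T/(M\alpha)$, linear in $T$), and your proposed conversion to ``$\frac{9L^4}{M}\cdot\frac{1}{L^2}\ln(1+L^2T/\alpha)$'' would use $\|\bg_t\|^2\le L^2$ in the wrong direction. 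The paper keeps the gradient factor: $(\ell_t'(\eta_t))^2=(M\eta_t\|\bg_t\|^2-\langle\bg_t,\bg_t'\rangle)^2\le 10L^2\|\bg_t\|^2$ (using $\eta_t\le 2/M$ and Cauchy--Schwarz), so that Lemma~\ref{lm:integ} with $a_t=\|\bg_t\|^2$ yields $\frac{5L^2}{M}\ln\bigl(1+\frac{L^2T}{\alpha}\bigr)$. Your step is repairable (e.g.\ $|\ell_t'(\eta_t)|\le 3L\|\bg_t\|$ keeps the needed structure), but as stated it is invalid.

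Second, your final optimization is lossier than the paper's and does not produce the stated constants. Relaxing $\eta-\frac{M}{2}\eta^2\ge\frac{\eta}{2}$ and minimizing $\frac{2Q}{\eta}+M\eta\Sigma$ over $\eta\in(0,1/M]$, with $\Sigma=\sum_{t=1}^T\E[\sigma_t^2]$ and $Q=f(\bx_1)-f^\star+\frac{\alpha}{2M}+\frac{5L^2}{M}\ln\bigl(1+\frac{L^2T}{\alpha}\bigr)$, gives at best $A\le 2MQ+2\sqrt{2MQ\Sigma}$ for $A=\sum_{t=1}^T\E[\|\nabla f(\bx_t)\|^2]$, i.e.\ an extra factor $2$ on the noise term; in the noise-dominated regime this is strictly weaker than the theorem, whose bound corresponds to $A\le 2MQ-\alpha+\sqrt{2MQ\Sigma}$. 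The paper avoids the relaxation entirely: it keeps $\eta-\frac{M}{2}\eta^2$ exact, views the inequality as a quadratic in $\eta$, plugs in the minimizer $\eta^*=\frac{\alpha+A}{M(\alpha+A+\Sigma)}\le\frac{1}{M}$, obtains $\frac{(\alpha+A)^2}{2M(\alpha+A+\Sigma)}\le Q$, solves this quadratic inequality in $A$, and finally splits the square root via $\sqrt{a+b}\le\sqrt{a}+\sqrt{b}$ to land exactly on the two stated square-root terms. So even after fixing the first issue, your route proves a bound of the same shape but with the square-root terms inflated by a factor up to $2$, not the statement as written. (Minor point: $r(\eta)\le\frac{\alpha}{2M}$ holds for every $\eta\in[0,\frac{2}{M}]$, since $(\eta-\frac{1}{M})^2\le\frac{1}{M^2}$ there, so no care about the comparator location is needed.)
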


Before proving this theorem, we make some observations.

The FTRL update gives us a very simple strategy to calculate the stepsizes $\eta_t$. In particular, the FTRL update has a closed form:
\begin{align*}
\eta_{t} = \max\left\{0,\min\left\{\frac{\alpha+\sum^{t-1}_{j=1}\langle \bg_j, \bg'_j\rangle}{M\left(\alpha+\sum^{t-1}_{j=1}\|\bg_j\|^2\right)},\frac2M\right\}\right\}~.
\end{align*}
Note that this update can be efficiently computed by keeping track of the quantities $\sum^{t-1}_{j=1}\langle \bg_j, \bg'_j\rangle$ and $\sum^{t-1}_{j=1}\|\bg_j\|^2$.

While the computational complexity of calculating $\eta_t$ by FTRL is negligible, SGDOL requires two unbiased gradients per step. This increases the computational complexity with respect to a plain SGD procedure by a factor of two.

The value of $\alpha$ affects how fast $\eta_t$ deviates from its initial value $\frac1M$. Although Theorem~\ref{thm:ftrlp} shows that a too small $\alpha$ would blow up the log
factor, it also indicates setting $\alpha$ to be comparable with $M(f(x_ 1) - f(x^*))$ or smaller would suffice for not inducing a major influence on the convergence rate. In fact, preliminary experiments have shown that $\alpha$ has no notable influence on performance so long as it is comparable to M or smaller.

We can now prove the convergence rate in Theorem~\ref{thm:ftrlp}.
For the proof, we need the following technical lemma.
\begin{lemma}
\label{lm:integ}
Let $h:[0,+\infty)\rightarrow [0, +\infty)$ be a nonincreasing function, and $a_i\geq0$ for $i = 0, \cdots, T$.
Then
\begin{align*}
\sum_{t=1}^T a_t h\left(a_0+\sum_{i=1}^{t} a_i\right) 
&\leq \int_{a_0}^{\sum_{t=0}^T a_t} h(x) dx~.
\end{align*}
\end{lemma}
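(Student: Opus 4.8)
The plan is to bound each term of the sum by the integral of $h$ over a sub-interval of $[a_0, \sum_{t=0}^T a_t]$, arranged so that these sub-intervals tile the whole interval; summing then yields the claim by telescoping. Concretely, for $t = 0, 1, \ldots, T$ define the partial sums $S_t = a_0 + \sum_{i=1}^{t} a_i$, so that $S_0 = a_0$, $S_T = \sum_{t=0}^T a_t$, and $S_{t-1} \le S_t$ with $S_t - S_{t-1} = a_t$. The key observation is that on the interval $[S_{t-1}, S_t]$ every point $x$ satisfies $x \le S_t$, and since $h$ is nonincreasing this gives $h(x) \ge h(S_t)$ pointwise on that interval. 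Integrating this inequality over $[S_{t-1}, S_t]$, whose length is $a_t$, yields
\[
\int_{S_{t-1}}^{S_t} h(x)\, dx \;\ge\; a_t\, h(S_t) \;=\; a_t\, h\!\left(a_0 + \sum_{i=1}^{t} a_i\right).
\]

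I would then sum this inequality over $t = 1, \ldots, T$. The left-hand side telescopes, since consecutive integration limits match up, to $\int_{S_0}^{S_T} h(x)\, dx = \int_{a_0}^{\sum_{t=0}^T a_t} h(x)\, dx$, which is exactly the right-hand side of the lemma. The right-hand side of the summed inequality is exactly $\sum_{t=1}^T a_t h(a_0 + \sum_{i=1}^{t} a_i)$, and we are done.

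A couple of minor points to address for rigor but which are not real obstacles: (i) if some $a_t = 0$ the interval $[S_{t-1}, S_t]$ is degenerate and both sides of the per-term inequality are $0$, so it still holds; (ii) $h$ being nonincreasing and nonnegative on a bounded interval is Riemann integrable (monotone functions have at most countably many discontinuities), so all the integrals above are well-defined and finite, and the elementary monotonicity-of-the-integral step is justified. Overall this is a standard "sum dominated by an integral" comparison, so I do not anticipate any substantive difficulty; the only thing to be careful about is getting the direction of the inequality right, i.e. evaluating $h$ at the \emph{right} endpoint $S_t$ (the larger argument, hence the smaller value of $h$) so that $h(S_t)$ lower-bounds $h$ on the whole sub-interval.
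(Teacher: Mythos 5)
Your proposal is correct and is essentially the same argument as the paper's proof: bound $a_t h(S_t)$ by $\int_{S_{t-1}}^{S_t} h(x)\,dx$ using monotonicity of $h$, then sum and telescope. The extra remarks on degenerate intervals and integrability are fine but not needed beyond what the paper already does.
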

\begin{proof}
Denote by $s_t=\sum_{i=0}^{t} a_i$.
\begin{align*}
a_t h(s_t) 
=  \int_{s_{t-1}}^{s_t} h(s_t) d x 
\leq \int_{s_{t-1}}^{s_t} h(x) d x~.
\end{align*}
Summing over $t=1, \cdots, T$, we have the stated bound.
\end{proof}

\begin{proof}[Proof of Theorem~\ref{thm:ftrlp}]
As $\ell_t^{\prime\prime}(\eta)=M\|\bg_t\|^2$, $h_{t} = r + \sum_{i=1}^t \ell_{i}$ is 1-strongly-convex w.r.t.\ the norm $\sqrt{M\left(\alpha + \sum^t_{s=1}\|\bg_s\|^2\right)}\|\cdot\|$.


Applying Theorem~\ref{thm:prox}, we get that, for any $\eta\in\left[0,\frac2M\right]$,
\begin{align}
&\mathrm{Regret}_T(\eta) \nonumber \\
&\le \frac{M\alpha}2\left(\eta-\frac1M\right)^2 + \frac{1}{2M}\sum^T_{t=1}\frac{\left(\ell_t^{\prime}(\eta_t)\right)^2}{\alpha + \sum^t_{s=1}\|\bg_s\|^2}. \label{eq:regret}
\end{align}
Now observe that 
\begin{align*}
\left(\ell^{\prime}_t(\eta_t)\right)^2
&= \left(-\langle\bg_{t},\bg^{\prime}_{t} \rangle + M\eta_t\|\bg_{t}\|^2\right)^2\\
&\le 2\langle\bg_{t},\bg^{\prime}_{t} \rangle^2 + 2M^2\eta_t^2\|\bg_{t}\|^4\\
&\le 2\|\bg_{t}\|^2\|\bg_{t}^{\prime}\|^2 + 8\|\bg_{t}\|^4
\le 10L^2\|\bg_{t}\|^2,
\end{align*}
where in the third line of which we used the Cauchy-Schwarz inequality and $\eta_t\le\frac2M$. Hence, we get
\begin{align*}
\frac{1}{2M}\sum^T_{t=1}&\frac{\left(\ell_t^{\prime}(\eta_t)\right)^2}{\alpha + \sum^t_{s=1}\|\bg_s\|^2} \\
\leq& \frac{5L^2}{M}\sum^T_{t=1}\frac{\|\bg_t\|^2}{\alpha + \sum^t_{s=1}\|\bg_s\|^2}\\
\le& \frac{5L^2}{M}\ln\left(\frac{\alpha + \sum^T_{t=1}\|\bg_t\|^2}{\alpha}\right)\\
\le& \frac{5L^2}{M}\ln\left(1+\frac{L^2T}{\alpha}\right),
\end{align*}
where in the second inequality we used Lemma~\ref{lm:integ}.

Put the last inequality above back into Theorem~\ref{thm:olsmooth} yields
\begin{align*}
\E&\left[\left(\eta-\frac{M}{2}\eta^2\right)\sum_{t=1}^T \|\nabla f(\bx_t)\|^2\right] \\
&\leq f(\bx_1) - f^\star + \frac{M\alpha}2\left(\eta-\frac1M\right)^2 \\
&\quad + \frac{5L^2}{M}\ln\left(1+\frac{L^2T}{\alpha}\right) + \frac{M \eta^2}{2}\sum^T_{t=1} \E[\sigma_t^2]~.
\end{align*}
Denote $A\triangleq \sum^T_{t=1}\E\left[\|\nabla f(\bx_t)\|^2\right]$, we can transform the above into a quadratic inequality of $\eta$:
\begin{align*}
0\ \le&\phantom{+}\frac M2\left(A+\alpha+\sum^T_{t=1} \E[\sigma_t^2]\right)\eta^2 - (A+\alpha)\eta\\
&+ f(\bx_1) - f^\star + \frac{5L^2}{M}\ln\left(1+\frac{L^2T}{\alpha}\right) + \frac{\alpha}{2M}~.
\end{align*}
Choosing $\eta$ as the minimizer of the right hand side: $\eta^*=\frac{\alpha+A}{M(\alpha+A) + M\sum^T_{t=1} \E[\sigma_t^2]}$ (which satisfies $\eta^*\le\frac2M$) gives us
\begin{align*}
& \frac{(\alpha+A)^2}{2M\left(\alpha+A +\sum_{t=1}^T \E[\sigma_{t}^2]\right)}\\
\le\ &f(\bx_1)-f^\star + \frac{\alpha}{2M} + \frac{5L^2}{M}\ln\left(1+\frac{L^2T}{\alpha}\right)~.
\end{align*}
Solving this quadratic inequality of $A$ yields
\begin{align*}
A 
&\le 2M\left(f(\bx_1)-f^\star + \frac{5L^2}{M}\ln\left(1+\frac{L^2T}{\alpha}\right)\right)\\
&\quad + \sqrt{2M \sum^T_{t=1} \E[\sigma_t^2] \left(f(\bx_1)-f^\star + \frac{\alpha}{2M}\right)}\\
&\quad + \sqrt{10L^2 \sum^T_{t=1} \E[\sigma_t^2] \ln\left(1+\frac{L^2T}{\alpha}\right)}~.
\end{align*}

By taking an $\bx_k$ from $\bx_1,\ldots,\bx_T$ randomly, we get:
\begin{align*}
\E\left[\|\nabla f(\bx_k)\|^2\right] 
&= \E\left[\E\left[\|\nabla f(\bx_k)\|^2\middle\vert k\right]\right]\\
&= \frac1T\sum^T_{t=1}\E\left[\|\nabla f(\bx_t)\|^2\right],
\end{align*}
that completes the proof.
\end{proof}

\paragraph{Polyak-\L{}ojasiewicz Condition.}
When we assume in addition that the objective function satisfies the Polyak-\L{}ojasiewicz Condition~\cite{KarimiNS16} (\textbf{H5}), we can get the linear rate in the noiseless case.
\begin{thm}
\label{thm:pldet}
Choosing $r(\eta)=\frac{M\alpha}2\left(\eta-\frac1M\right)^2+\mathcal{I}\left(\eta\in\left[0,\frac2M\right]\right)$ with $\alpha>0$, assume \textnormal{(\textbf{H1 - H5})}, and that $\bg(\bx_t, \xi_t)=\bg(\bx_t, \xi'_t)=\nabla f(\bx_t)$ for all $t$ (i.e. there is no noise). Then feeding Algorithm~\ref{algo:aftrl} into Algorithm~\ref{algo:sgdol}, yields:
\begin{equation*}
\E\left[f(\bx_{T+1})-f^\star\right]
\le
\left(1-\frac{\mu}{M}\right)^T [f(\bx_1)-f^\star]~.
\end{equation*}
\end{thm}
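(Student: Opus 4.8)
The plan is to exploit the fact that, in the noiseless regime, the stepsizes learned by FTRL degenerate to the constant value $\tfrac1M$, after which the statement reduces to the textbook linear-rate argument for gradient descent under the PL condition.

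First I would identify the stepsizes. Since $\bg(\bx_t,\xi_t)=\bg(\bx_t,\xi'_t)=\nabla f(\bx_t)$, we have $\langle \bg_j,\bg'_j\rangle=\|\bg_j\|^2=\|\nabla f(\bx_j)\|^2$ for every $j$. Plugging this into the closed-form FTRL update
\[
\eta_{t} = \max\left\{0,\min\left\{\frac{\alpha+\sum^{t-1}_{j=1}\langle \bg_j, \bg'_j\rangle}{M\left(\alpha+\sum^{t-1}_{j=1}\|\bg_j\|^2\right)},\frac2M\right\}\right\},
\]
the numerator equals the parenthesized factor in the denominator, so the middle term is exactly $\tfrac1M\in[0,\tfrac2M]$; hence $\eta_t=\tfrac1M$ for all $t$. (Equivalently, the noiseless surrogate $\ell_t(\eta)=\tfrac{M\eta^2}{2}\|\nabla f(\bx_t)\|^2-\eta\|\nabla f(\bx_t)\|^2$ and the regularizer $r$ are both minimized at $\eta=\tfrac1M$, so the regularized leader never moves off its initialization.)

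Next I would substitute $\eta_t=\tfrac1M$ into the SGD update $\bx_{t+1}=\bx_t-\tfrac1M\nabla f(\bx_t)$ and apply \eqref{eq:smooth2}, which gives $f(\bx_{t+1})\le f(\bx_t)-\tfrac1M\|\nabla f(\bx_t)\|^2+\tfrac{M}{2M^2}\|\nabla f(\bx_t)\|^2 = f(\bx_t)-\tfrac1{2M}\|\nabla f(\bx_t)\|^2$. Then invoking the PL condition (\textbf{H5}), $\|\nabla f(\bx_t)\|^2\ge 2\mu(f(\bx_t)-f^\star)$, and subtracting $f^\star$ from both sides yields the one-step contraction $f(\bx_{t+1})-f^\star\le\left(1-\tfrac{\mu}{M}\right)(f(\bx_t)-f^\star)$. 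Iterating this from $t=1$ to $T$ and noting that the whole trajectory is deterministic (so the expectation is vacuous) produces the claimed bound.

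There is essentially no hard step here: the only point requiring care is the first one — verifying that the projection onto $[0,\tfrac2M]$ is inactive and that the update genuinely collapses to $\tfrac1M$ at every round, which rests on the noiseless identity $\langle\bg_j,\bg'_j\rangle=\|\bg_j\|^2$ together with $\mu\le M$ (itself implied by PL plus $M$-smoothness, which also ensures $1-\tfrac{\mu}{M}\ge0$). After that, the argument is the standard proof that gradient descent with stepsize $\tfrac1M$ converges linearly under PL.
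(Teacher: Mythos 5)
Your proposal is correct and follows essentially the same route as the paper's proof: argue that in the noiseless case the FTRL update collapses to $\eta_t=\frac1M$ for all $t$, then apply the smoothness inequality and the PL condition to get the per-step contraction $f(\bx_{t+1})-f^\star\le\left(1-\frac{\mu}{M}\right)\left(f(\bx_t)-f^\star\right)$ and iterate. Your explicit verification that the numerator equals $M^{-1}$ times the denominator in the closed-form update (so the clipping to $\left[0,\frac2M\right]$ is inactive) is a slightly more detailed justification of a step the paper states without elaboration.
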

\begin{proof}
From the update rule of $\eta_t$, we have that when there is no noise, $\eta_t = \frac1M$ all the time, thus:
\begin{align*}
\E&[f(\bx_{T+1})-f^\star]\\
&\le\E\left[f(\bx_{T})-f^\star+\langle\nabla f(\bx_T),\bx_{T+1}-\bx_T \rangle\right]\\
& \quad + \E\left[\frac M2\|\bx_{T+1}-\bx_T\|^2\right]\\
&=\E\left[f(\bx_{T})-f^\star + \left(-\eta_T + \frac{M\eta_T^2}2\right)\|\nabla f(\bx_T)\|^2\right]\\
&=\E\left[f(\bx_{T})-f^\star - \frac1{2M}\|\nabla f(\bx_T)\|^2\right]\\
&\le\E\left[\left(1-\frac{\mu}{M}\right)(f(\bx_{T})-f^\star)\right]
\le\cdots\le\\
&\le\left(1-\frac{\mu}{M}\right)^T[f(\bx_{1})-f^\star]~. \qedhere
\end{align*}
\end{proof}

\section{Adapting Per-coordinate Stepsizes}
\label{sec:coordinate}

In the previous Section, we have shown how to use the surrogate loss functions to adapt a stepsize. Another common strategy in practice is to use a \emph{per-coordinate} stepsize. This kind of scheme is easily incorporated into our framework and we show that it can provide improved adaptivity to per-coordinate variances.

Specifically, we consider $\etab_t$ now to be a vector in $\R^d$, $\etab_t=(\eta_{t,1},\dots,\eta_{t,d})$ and use the update $\bx_{t+1} = \bx_t - \etab_t\bg_t$
where $\etab_t \bg_t$ now indicates coordinate-wise product $(\eta_{t,1}g_{t,1},\dots,\eta_{t,d}g_{t,d})$. Then we define
\begin{align*}
\ell_t(\etab) &= \frac{M}{2}\|\etab \bg(\bx_t,\xi_t)\|^2
-
\langle \etab \bg(\bx_t,\xi_t), \bg(\bx_t,\xi_t')\rangle\\
&= \sum_{i=1}^d\left[ \frac{M}{2}\eta_{i}^2g(\bx_t,\xi_t)_i^2 - \eta_{i}g(\bx_t,\xi_t)_ig(\bx_t,\xi_t')_i\right]~.
\end{align*}
To take advantage of this scenario, we need more detail about the variance, which we encapsulate in the following assumption:

\textbf{H2':}\quad The noisy gradients $\bg_t$ have finite variance in each coordinate:
\begin{align*}
\E_{t}\left[(g(\bx_t,\xi_t)_i-\nabla f(\bx_t)_i)^2\right]
&=\sigma_{t,i}^2~.
\end{align*}
Note that this assumption is not actually stronger than (\textbf{H2}) because we can define $\sigma_t^2 = \sum_{i=1}^d \sigma_{t,i}^2$. This merely provides finer-grained variable names.

Also, we make the assumption:

\textbf{H3':}\quad The noisy gradients have bounded coordinate values: 
\begin{align*}
|g(\bx_t,\xi_t)_i|\le L_i,\ |g(\bx_t,\xi'_t)_i|\le L_i~.
\end{align*}

Now the exact same argument as for Theorem \ref{thm:olsmooth} yields:
\begin{thm}
\label{thm:olsmoothdiag}
Assume (\textbf{H4}) and the two noisy gradients in each round $t$ to satisfy \textbf{(H1)} and \textbf{(H2')}. Then, for any $\eta\in \R^d$ with $\eta_i>0$ for all $i$, the per-coordinate variant of Algorithm \ref{algo:sgdol} obtains
\begin{align*}
\E&\left[\sum_{t=1}^T \sum_{i=1}^d\left(\eta_i-\frac{M}{2}\eta_i^2\right)\nabla f(\bx_t)_i^2\right] \\
&\leq f(\bx_1) - f^\star + \E\left[\mathrm{Regret}_T(\etab)\right] + \frac{M}{2}\sum_{i=1}^d\sum_{t=1}^T \eta_i^2 \E[\sigma_{t,i}^2]~.
\end{align*}
\end{thm}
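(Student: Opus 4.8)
The plan is to run, coordinate by coordinate, the very same chain of inequalities that proved Theorem~\ref{thm:olsmooth}. The first task is therefore to establish the per-coordinate analogue of Theorem~\ref{thm:surrogate}, namely $\E[f(\bx_{t+1})-f(\bx_t)] \le \E[\ell_t(\etab_t)]$ with $\ell_t$ now the vector-valued surrogate defined above. I would start from $M$-smoothness in the form~\eqref{eq:smooth2}, substitute the per-coordinate update $\bx_{t+1}-\bx_t = -\etab_t\bg_t$, and expand $\langle\nabla f(\bx_t),-\etab_t\bg_t\rangle = -\sum_i \eta_{t,i}\nabla f(\bx_t)_i\, g(\bx_t,\xi_t)_i$ and $\|\etab_t\bg_t\|^2 = \sum_i \eta_{t,i}^2\, g(\bx_t,\xi_t)_i^2$. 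Conditioning on the past (with respect to whose $\sigma$-algebra $\etab_t$ is measurable, since it is computed from $\ell_1,\dots,\ell_{t-1}$), I would replace $\nabla f(\bx_t)_i$ by $\E_t[g(\bx_t,\xi'_t)_i]$ and use that $\xi'_t$ is conditionally independent of $(\etab_t,\xi_t)$ to obtain $\E_t[\eta_{t,i}\nabla f(\bx_t)_i\, g(\bx_t,\xi_t)_i] = \E_t[\eta_{t,i}\, g(\bx_t,\xi'_t)_i\, g(\bx_t,\xi_t)_i]$, which recombines into $\E_t[\langle\etab_t\bg_t,\bg_t'\rangle]$, exactly the cross term of $\ell_t(\etab_t)$.

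With the per-coordinate surrogate bound established, I would sum it over $t=1,\dots,T$, telescope the left-hand side, and use $f^\star \le \E[f(\bx_{T+1})]$ to get $f^\star - f(\bx_1) \le \sum_{t=1}^T \E[\ell_t(\etab_t)]$. Adding and subtracting $\ell_t(\etab)$ for the fixed comparator $\etab$ turns $\sum_t \E[\ell_t(\etab_t)]$ into $\E[\mathrm{Regret}_T(\etab)] + \sum_t \E[\ell_t(\etab)]$, precisely as in the proof of Theorem~\ref{thm:olsmooth}; note that here $\mathrm{Regret}_T(\etab)$ is the regret of whatever online algorithm is plugged in, over the $d$-dimensional decision set, so nothing algorithm-specific is needed at this stage.

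The final step is to evaluate $\E_t[\ell_t(\etab)]$ for a fixed $\etab$. Coordinate-wise, $\E_t[g(\bx_t,\xi_t)_i\, g(\bx_t,\xi'_t)_i] = \nabla f(\bx_t)_i^2$ by (\textbf{H1}) (unbiasedness and conditional independence of the two oracle calls), and $\E_t[g(\bx_t,\xi_t)_i^2] = \nabla f(\bx_t)_i^2 + \sigma_{t,i}^2$ by (\textbf{H2'}); hence $\E_t[\ell_t(\etab)] = \sum_{i=1}^d\left[\left(\tfrac{M}{2}\eta_i^2-\eta_i\right)\nabla f(\bx_t)_i^2 + \tfrac{M}{2}\eta_i^2\sigma_{t,i}^2\right]$. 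Taking a further expectation, substituting into the telescoped bound, and moving the $\nabla f(\bx_t)_i^2$ terms to the left-hand side yields the stated inequality.

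I do not expect a genuine obstacle, only one point requiring care: the conditional-independence bookkeeping must now be carried out per coordinate. In particular, one must invoke that the two oracle queries $\bg(\bx_t,\xi_t)$ and $\bg(\bx_t,\xi'_t)$ are independent given the past, so that $\E_t[g(\bx_t,\xi_t)_i\, g(\bx_t,\xi'_t)_i] = \nabla f(\bx_t)_i^2$ holds coordinate-wise and not merely in the aggregated inner-product form literally stated in (\textbf{H1})\,---\,this is exactly why the oracle was specified to return two independent samples. Everything else is the same algebra as in Theorem~\ref{thm:olsmooth}, now performed inside a sum over the $d$ coordinates.
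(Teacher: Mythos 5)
Your proposal is correct and follows exactly the route the paper intends: the paper gives no separate proof for this theorem, stating only that ``the exact same argument as for Theorem~\ref{thm:olsmooth}'' applies, and your coordinate-wise repetition of the surrogate bound, telescoping, regret decomposition, and evaluation of $\E_t[\ell_t(\etab)]$ is precisely that argument. Your added remark that the coordinate-wise identity $\E_t[g(\bx_t,\xi_t)_i\,g(\bx_t,\xi'_t)_i]=\nabla f(\bx_t)_i^2$ needs the conditional independence of the two oracle queries (not just the aggregated form written in (\textbf{H1})) is a correct and welcome clarification of a point the paper leaves implicit.
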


With this Theorem in hand, once again all that remains is to choose the online learning algorithm. To this end, observe that we can write $\ell_t(\eta) = \sum_{i=1}^d \ell_{t,i}(\eta_i)$ where
\begin{align*}
\ell_{t,i}(\eta_i) = \frac{M}{2}\eta_{i}^2g(x_t,\xi_t)_i^2 - \eta_{i}g(x_t,\xi_t)_ig(x_t,\xi_t')_i~.
\end{align*}
Thus, we can take our online learning algorithm to be a per-coordinate instantiation of Algorithm \ref{algo:aftrl}, and the total regret is simply the sum of the per-coordinate regrets. Each per-coordinate regret can be analyzed in exactly the same way as Algorithm \ref{algo:aftrl}, leading to
\begin{align*}
\mathrm{Regret}_T(\etab)&=\sum_{i=1}^d \mathrm{Regret}_{T,i}(\eta_i),\\
\mathrm{Regret}_{T,i}(\eta_i)&\le \frac{M\alpha}{2}\left(\eta_i-\frac{1}{M}\right)^2 + \frac{5L_i^2}{M}\ln\left(1+\frac{L_i^2T}{\alpha}\right)~.
\end{align*}
From these inequalities we can make a per-coordinate bound on the gradient magnitudes. In words, the coordinates which have smaller variances $\sigma^2_{t,i}$ achieve smaller gradient values faster than coordinates with larger variances. Further, we preserve adaptivity to the full variance $\sum^T_{t=1} \E[\sigma_t^2]$ in the rate of decrease of $\|\nabla f(x)\|^2$.
\begin{thm}\label{thm:pcftrlp}
Assume \textnormal{(\textbf{H1}, \textbf{H2'}, \textbf{H3'}, \textbf{H4})}. Suppose we run a per-coordinate variant of Algorithm~\ref{algo:sgdol}, with regularizer $r(\eta_i)=\frac{M\alpha}2\left(\eta_i-\frac1M\right)^2+\mathcal{I}\left(\eta_i\in\left[0,\frac2M\right]\right)$ in each coordinate with $\alpha>0$. Then, for each $i\in\{1,\dots,d\}$, we have
\begin{align*}
\E&\left[\sum_{t=1}^T\nabla f(\bx_t)_i^2\right]\\
&\le 2M\left(f(\bx_1)-f^\star + \sum_{i=1}^d\frac{5 L_i^2}{M}\ln\left(1+\frac{L_i^2T}{\alpha}\right)\right)\\
&\quad + \sqrt{2M\sum^T_{t=1} \E[\sigma_{t,i}^2]\left(f(\bx_1)-f^\star + \frac{d\alpha}{2M}\right)}\\
&\quad + \sqrt{10\sum^T_{t=1} \E[\sigma_{t,i}^2]\sum_{i=1}^d L_i^2\ln\left(1+\frac{L_i^2T}{\alpha}\right)}\\
&\quad+(d-1)\alpha~.
\end{align*}
Further, with $\sigma_t=\sum_{t=1}^T \sigma_{t,i}^2$ it also holds
\begin{align*}
\E&\left[\sum_{t=1}^T\|\nabla f(\bx_t)\|^2\right]\\
&\le 2M\left(f(\bx_1)-f^\star + \sum_{i=1}^d\frac{5 L_i^2}{M}\ln\left(1+\frac{L_i^2T}{\alpha}\right)\right)\\
&\quad + \sqrt{2M\sum^T_{t=1} \E[\sigma_t^2]\left(f(\bx_1)-f^\star + \frac{d\alpha}{2M}\right)}\\
&\quad + \sqrt{10\sum^T_{t=1} \E[\sigma_t^2]\sum_{i=1}^d L_i^2\ln\left(1+\frac{L_i^2T}{\alpha}\right)}~.
\end{align*}
\end{thm}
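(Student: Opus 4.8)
The plan is to run the proof of Theorem~\ref{thm:ftrlp} twice — once to isolate a single coordinate, once with a uniform stepsize — using two ingredients established just above the statement. First, Theorem~\ref{thm:olsmoothdiag} holds for \emph{every} comparator $\etab\in[0,\tfrac2M]^d$ (the restriction $\eta_i>0$ is harmless: its argument only uses $\E_t[\ell_{t,i}(\eta_i)]=(-\eta_i+\tfrac M2\eta_i^2)\nabla f(\bx_t)_i^2+\tfrac M2\eta_i^2\sigma_{t,i}^2$, valid for $\eta_i\ge 0$, and at $\eta_i=0$ the corresponding summand simply vanishes). Second, $\mathrm{Regret}_T(\etab)=\sum_{i=1}^d\mathrm{Regret}_{T,i}(\eta_i)$ with the deterministic per-coordinate bound $\mathrm{Regret}_{T,i}(\eta_i)\le\tfrac{M\alpha}2(\eta_i-\tfrac1M)^2+\tfrac{5L_i^2}{M}\ln(1+\tfrac{L_i^2T}{\alpha})$ (already derived via Lemma~\ref{lm:integ} and~(\textbf{H3'})), which therefore also holds in expectation for any fixed comparator.

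\emph{Per-coordinate bound.} Fix $i$ and apply Theorem~\ref{thm:olsmoothdiag} with the comparator $\eta_j=0$ for $j\ne i$ and $\eta_i$ left free. On the left only the $j=i$ term survives, giving $(\eta_i-\tfrac M2\eta_i^2)A_i$ with $A_i\triangleq\E[\sum_t\nabla f(\bx_t)_i^2]$; on the right the regret contributes $\tfrac{M\alpha}2(\eta_i-\tfrac1M)^2+\tfrac{(d-1)\alpha}{2M}+\sum_{j=1}^d\tfrac{5L_j^2}{M}\ln(1+\tfrac{L_j^2T}{\alpha})$ (each of the $d-1$ switched-off coordinates pays $\tfrac{M\alpha}2(0-\tfrac1M)^2=\tfrac{\alpha}{2M}$ plus its log term), and the variance term collapses to $\tfrac M2\eta_i^2\sum_t\E[\sigma_{t,i}^2]$. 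From here the computation is verbatim that of Theorem~\ref{thm:ftrlp}: expand the square and rearrange into $0\le\tfrac M2(A_i+\alpha+V_i)\eta_i^2-(A_i+\alpha)\eta_i+B_i$ with $V_i=\sum_t\E[\sigma_{t,i}^2]$ and $B_i=f(\bx_1)-f^\star+\tfrac{d\alpha}{2M}+\sum_j\tfrac{5L_j^2}{M}\ln(1+\tfrac{L_j^2T}{\alpha})$; plug in the minimizer $\eta_i^\star=\tfrac{A_i+\alpha}{M(A_i+\alpha+V_i)}\in(0,\tfrac1M]$ to get $(A_i+\alpha)^2\le 2MB_i(A_i+\alpha+V_i)$; solve this quadratic in $A_i+\alpha$ and split $\sqrt{2MB_iV_i}\le\sqrt{2MV_i(f(\bx_1)-f^\star+\tfrac{d\alpha}{2M})}+\sqrt{10V_i\sum_jL_j^2\ln(1+\tfrac{L_j^2T}{\alpha})}$ using $\sqrt{x+y}\le\sqrt x+\sqrt y$. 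The $\alpha$ subtracted when passing from $A_i+\alpha$ back to $A_i$ cancels one unit of the $d\alpha$ inside $2MB_i=2M(f(\bx_1)-f^\star)+d\alpha+10\sum_jL_j^2\ln(\cdots)$, leaving the advertised $(d-1)\alpha$.

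\emph{Aggregate bound.} Repeat with the uniform comparator $\eta_j=\eta$ for all $j$. Now the left side is $(\eta-\tfrac M2\eta^2)\mathcal A$ with $\mathcal A\triangleq\E[\sum_t\|\nabla f(\bx_t)\|^2]=\sum_iA_i$; the summed regret is $\tfrac{dM\alpha}2(\eta-\tfrac1M)^2+\sum_i\tfrac{5L_i^2}{M}\ln(1+\tfrac{L_i^2T}{\alpha})$; and the variance term is $\tfrac M2\eta^2\sum_t\E[\sigma_t^2]$ with $\sigma_t^2=\sum_i\sigma_{t,i}^2$. The same four steps (minimizer $\eta^\star=\tfrac{\mathcal A+d\alpha}{M(\mathcal A+d\alpha+\sum_t\E[\sigma_t^2])}\le\tfrac1M$) give $(\mathcal A+d\alpha)^2\le 2M\mathcal B(\mathcal A+d\alpha+\sum_t\E[\sigma_t^2])$ with $\mathcal B=f(\bx_1)-f^\star+\tfrac{d\alpha}{2M}+\sum_i\tfrac{5L_i^2}{M}\ln(1+\tfrac{L_i^2T}{\alpha})$, and the square-root split finishes it; this time passing from $\mathcal A+d\alpha$ back to $\mathcal A$ removes exactly the $d\alpha$ inside $2M\mathcal B$, so no stray $\alpha$ remains.

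The only genuinely new point over Theorem~\ref{thm:ftrlp} is the choice of comparator in each case — ``switch off'' all but coordinate $i$ for the first bound, ``tie all stepsizes'' for the second — together with the attendant $\alpha$-bookkeeping (the $(d-1)\tfrac{\alpha}{2M}$ cost of the dropped coordinates, the $\tfrac{\alpha}{2M}$ from completing the square, and their cancellations with $2MB_i$ resp.\ $2M\mathcal B$). I expect this bookkeeping, rather than any analytic difficulty, to be the main place to be careful; everything else is a routine rerun of the scalar quadratic-formula manipulation, and one should additionally verify the mild admissibility facts $\eta_i^\star,\eta^\star\le\tfrac2M$ and $0\le\eta_j-\tfrac M2\eta_j^2$ (the latter only needed if one prefers to keep the off-coordinates at a small positive value and drop their nonnegative left-hand-side contribution rather than set them to $0$ outright).
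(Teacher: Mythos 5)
Your proposal is correct and yields exactly the paper's bounds, but the route differs in how the comparator is handled. The paper makes a \emph{single} application of the per-coordinate analogue of Theorem~\ref{thm:olsmooth}: it plugs in the per-coordinate optimal comparator $\eta_j^\star=\frac{\alpha+A_j}{M(\alpha+A_j+\sum_t\E[\sigma_{t,j}^2])}$ in \emph{every} coordinate simultaneously, obtaining $\sum_{j=1}^d Q_j\le Q$ with $Q_j=\frac{(\alpha+A_j)^2}{2M(\alpha+A_j+\sum_t\E[\sigma_{t,j}^2])}$ and $Q=f(\bx_1)-f^\star+\frac{d\alpha}{2M}+\sum_j\frac{5L_j^2}{M}\ln(1+\frac{L_j^2T}{\alpha})$; the first statement then follows by dropping the nonnegative terms $Q_j$, $j\ne i$, and the second by solving each quadratic $Q_j\le$ (its own bound) and summing with Cauchy--Schwarz, $\sum_j\sqrt{2MQ_j V_j}\le\sqrt{2MQ\sum_t\E[\sigma_t^2]}$. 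You instead make two separate applications with tailored comparators: ``all coordinates off except $i$'' (each dropped coordinate paying $r(0)=\frac{\alpha}{2M}$ plus its log term) and ``uniform $\eta$''; in both cases you arrive at precisely the same intermediate inequalities, $(A_i+\alpha)^2\le 2MQ(A_i+\alpha+V_i)$ and $(\mathcal A+d\alpha)^2\le 2MQ(\mathcal A+d\alpha+\sum_t\E[\sigma_t^2])$, so the final constants coincide. Your uniform-comparator derivation of the aggregate bound avoids Cauchy--Schwarz altogether, and your zero-comparator trick makes the $(d-1)\alpha$ versus $d\alpha$ bookkeeping transparent; the point you flag about extending Theorem~\ref{thm:olsmoothdiag} to $\eta_i=0$ (the expectation computation and the FTRL regret bound both hold there, and $0$ lies in the regularizer's domain) is indeed needed and correctly justified. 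The paper's version is slightly more economical in that one derivation serves both statements; otherwise the two arguments are of equal strength.
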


\begin{proof}
The proof is nearly identical to that of Theorem \ref{thm:ftrlp}. We have
\begin{align*}
\E&\left[\sum_{i=1}^d \left(\eta_i - \frac{M}{2}\eta_i^2\right)\sum_{t=1}^T \nabla f(\bx_t)_i^2\right]\\
&\le f(\bx_1)-f^\star + \frac{M\alpha }{2} \sum_{i=1}^d \left(\eta_i - \frac{1}{M}\right)^2\\
&+\sum_{i=1}^d \frac{5L_i^2}{M}\ln\left(1+\frac{L_i^2T}{\alpha}\right) +\sum_{i=1}^d \frac{M\eta_i^2}{2}\sum_{t=1}^T \E[\sigma_{t,i}^2]~.
\end{align*}

Let $A_i\triangleq\E[\sum_{t=1}^T \nabla f(\bx_t)_i^2]$ and choosing $\eta_i$ by the same strategy in Theorem~\ref{thm:ftrlp} as $\frac{\alpha+A_i}{M(\alpha+A_i+\sum_{t=1}^T \E[\sigma_{t,i}^2])}$ to obtain
\begin{align*}
\sum_{i=1}^d& \frac{(\alpha+A_i)^2}{2M\left(\alpha+A_i +\sum_{t=1}^T \E[\sigma_{t,i}^2]\right)}\\
&\le f(\bx_1)-f^\star + \frac{d\alpha}{2M}+\sum_{i=1}^d \frac{5L_i^2}{M}\ln\left(1+\frac{L_i^2T}{\alpha}\right)~.
\end{align*}
Now, the first statement of the Theorem follows by observing that each term on the LHS is non-negative so that the sum can be lower-bounded by any individual term. For the second statement, define
\begin{align*}
Q_i&=\frac{(\alpha+A_i)^2}{2M\left(\alpha+A_i +\sum_{t=1}^T \E[\sigma_{t,i}^2]\right)}\\
Q&=f(\bx_1)-f^\star + \frac{d\alpha}{2M}+\sum_{i=1}^d \frac{5L_i^2}{M}\ln\left(1+\frac{L_i^2T}{\alpha}\right),
\end{align*}
so that $\sum_{i=1}^d Q_i \le Q$. By the quadratic formula and definition of $Q_i$, we have
\begin{align*}
A_i &\le 2MQ_i + \sqrt{2MQ_i\sum_{t=1}^T \E[\sigma_{t,i}^2]} - \alpha~.
\end{align*}
Thus,
\begin{align*}
\sum_{i=1}^d A_i&\le 2MQ -d\alpha + \sum_{i=1}^d \sqrt{2MQ_i\sum_{t=1}^T \E[\sigma_{t,i}^2]}\\
&\le 2MQ -d\alpha + \sqrt{2M}\sqrt{\sum_{i=1}^d  Q_i}\sqrt{\sum_{i=1}^d\sum_{t=1}^T\E[\sigma_{t,i}^2]}\\
&=2MQ -d\alpha +  \sqrt{2MQ\sum_{t=1}^T \E[\sigma_t^2]}~.
\end{align*}
From which the second statement follows.
\end{proof}

\section{Experiments}
\label{sec:exp}

\begin{figure*}[t]
\begin{center}
\center
\includegraphics[width=\textwidth]{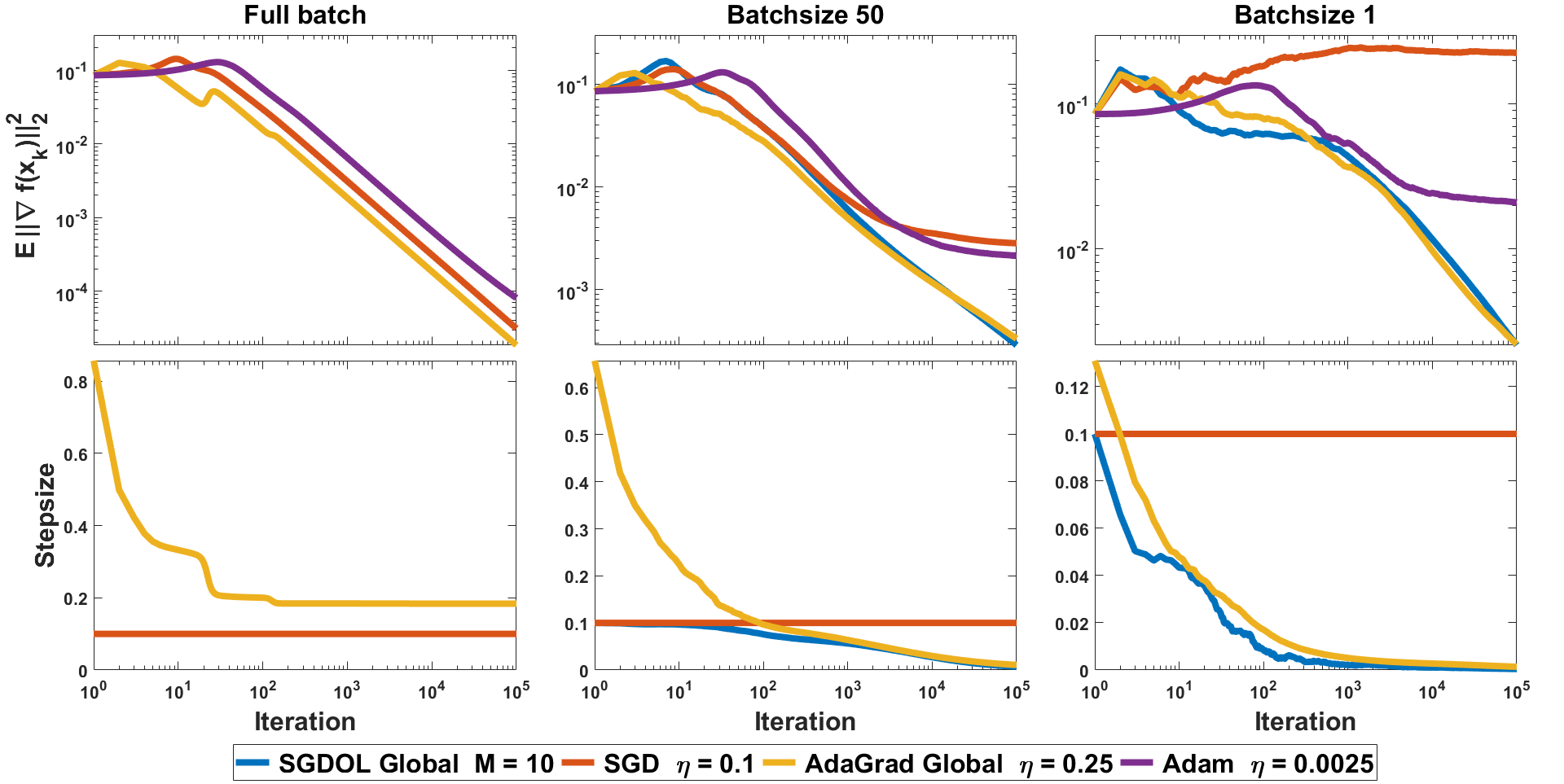}
\vspace{-1em}
\caption{Comparison of performance of SGDOL Global, SGD, AdaGrad Global, Adam on a non-linear classification model with different minibatch sizes.}
\label{fig:adult}
\end{center}
\vskip -0.2em
\end{figure*}

SGD is widely known to enjoy good empirical properties, but our learning rate schedule is very unique, so to validate our theoretical findings, we experiment on fitting a classification model on the adult (a9a) dataset from the LibSVM website~\cite{ChangL01}. The objective function is
\begin{equation*}
f(\bx):=\frac1m \sum^m_{i=1}\phi(\boldsymbol{a}_i^\top\bx-y_i),
\end{equation*}
where $\phi(\theta)=\frac{\theta^2}{1+\theta^2}$, and $(\boldsymbol{a}_i,y_i)$ are the couples feature vector/label. The loss function $\phi$ is non-convex, 1-Lipschitz and 2-smooth w.r.t. the $\ell_2$ norm.

We consider the minimization problem with respect to all training samples. Also, as the dataset is imbalanced towards the group with annual income less than 50K, we subsample that group to balance the dataset, which results in 15682 samples with 123 features each. In addition, we append a constant element to each sample feature vector to introduce a constant bias. $\bx_1$ is initialized to be all zeros. For each setting, we repeat the experiment independently but with the same initialization for 5 times, and plot the average of the relevant quantities.
In this setting, the noise on the gradient is generated by the use of minibatches.

We compare SGDOL with AdaGrad~\cite{DuchiHS10}, SGD, and Adam~\cite{KingmaB15} on three different minibatch sizes, namely different noise scales: using all samples, 50 i.i.d. samples, or 1 random sample for evaluating the gradient at a point. (\emph{Note that we adopt the scheme of using a single learning rate for all dimensions in SGDOL and AdaGrad thus the suffix `Global'.}) The learning rates of each algorithm, except for SGDOL Global, are selected as the ones giving the best convergence rate when the full batch scheme, namely zero noise, is employed, and are shown in the legend. We take the reciprocal of SGD's best learning rate as the parameter $M$ for SGDOL Global, and we set $\alpha=10$ without any tuning based on our discussion on the influence of $\alpha$ in Section~\ref{sec:ftrl}. These parameters are then employed in other two noisy settings.

We report the results in Figure~\ref{fig:adult}. In each column, the top plot shows $\E[\|\nabla f(\bx_k)\|^2]$ vs. number of iterations, whereas the bottom one is the per-round stepsizes on each case. Note that there is no per-round stepsize for Adam. The x-axis in all figures, and the y-axis in the top three are logarithmic. 

As can be seen, the stepsize of SGDOL is the same as SGD at first, but gradually decreases automatically. Also, the larger the noise, the sooner the decreasing phase starts. The decrease of the learning rate makes the convergence of SGDOL possible. In particular, SGDOL recovers the performance of SGD in the noiseless case, while it allows convergence in the noisy cases through an automatic decrease of the stepsizes. AdaGrad also enjoys nice convergence, and is comparable to ours. In contrast, when noise exists, after reaching a proximity of a stationary point, SGD and Adam oscillates thereafter without converging, and the value it oscillates around depends on the variance of the noise. This underlines the superiority of the surrogate losses, rather than choosing a stepsize based on a worst-case convergence rate bound.

More experiments can be found in the Appendix.

\section{Conclusions and Future Work}
\label{sec:conc}
We have presented a novel way to cast the problem of adaptive stepsize selection for the stochastic optimization of smooth (non-convex) functions as an online convex optimization problem with a simple quadratic convex surrogate. The reduction goes through the use of novel surrogate convex losses. This framework allows us to import the rich literature of no-regret online algorithms to learn stepsizes on the fly. We exemplified the power of this method with the SGDOL algorithm which enjoys an optimal convergence guarantee for any level of noise, without the need to estimate the noise nor tune the stepsizes. Moreover, we recover linear convergence rates under the PL-condition. The overall price to pay is a factor of 2 in the computation of the gradients. We also presented a per-coordinate version of SGDOL that achieves faster convergence on the coordinates with less noise.

We feel that we have barely scratched the surface of what might be possible with these surrogate losses. Hence, future work will focus on extending their use to other scenarios. For example, we plan to use it in locally private SGD algorithms where additional noise is added on the gradients to ensure privacy of the data~\cite{SongCS13}. We are also interested in investigating whether adding convexity would give us better results to recover SGD's performance. Another potential direction is to eliminate the need of knowing $M$, e.g. by automatically adapting to it on the fly. 

\section{Acknowledgment}
This material is based upon work supported
by the National Science Foundation under grant no.
1740762 ``Collaborative Research: TRIPODS Institute
for Optimization and Learning''.

\newpage
\balance

\bibliographystyle{icml2019}
\bibliography{../../../../learning}

\onecolumn
\section{Appendix}
\label{sec:append}

\subsection{2D Rosenbrock Function}
The popular 2-D Rosenbrock benchmark~\cite{Rosenbrock60} takes the form:
\begin{equation*}
f(x, y):=(1-x)^2 + 100(y-x^2)^2~.
\end{equation*}
It is non-convex and has one global minimum at $x=y=1$.

To add stochasticity, we apply additive white Gaussian noise to each gradient. To have a robust estimate of the performance, we repeat each experiment independently with the same parameters for 40 times and take the average.

We compare the performance of SGDOL Global with a bunch of popular adaptive optimization algorithms on the Rosenbrock function with 3 levels of added noise: zero noise, small noise ($\sigma=0.2$), and large noise ($\sigma=5$). The competitors are: SGD, AdaGrad Global~\cite{DuchiHS10} (with one learning rate for all dimensions), AdaGrad Coordinate~\cite{DuchiHS10} (with one learning rate for each dimension), Adam~\cite{KingmaB15}, RMSProp~\cite{TielemanH12}, AdaDelta~\cite{Zeiler12}, AMSGrad~\cite{ReddiKK18}, and Hypergradient~\cite{baydin-2018-hypergradient}. Also, we test the performance of the stepsize proposed by \citet{Ghadimi13}, denoted by SGD GL, given that in this synthetic experiment we know all the relevant quantities. We stress the fact that in the real-world setting this kind of stepsize cannot be used. We select the stepsize of all optimization algorithms except for SGDOL Global and SGD GL to be the one giving best convergence rate when running on the objective function with zero noise added. We choose $M$ of SGDOL to be the reciprocal of SGD's best learning rate which happens to be very close to the smoothness at the optimal point, 1002. We set $x_1=y_1=0$.

In Figure~\ref{fig:rosenmain}, the top plots show $\E[\|\nabla f(\bx_k)\|^2]$ vs. number of iterations, the middle ones reflect the curve of the optimality gap $f(\bx_t) - f^*$ at each round $t$ since we know $f^*=0$, whereas the bottom ones are the per-round stepsizes on each case. Note that there is no per-round stepsize for Adam. In Figure~\ref{fig:rosenother}, the top plots show $\E[\|\nabla f(\bx_k)\|^2]$ vs. number of iterations, and the bottom ones reflect the curve of the optimality gap $f(\bx_t) - f^*$ at each round $t$. The x-axis in all figures are logarithmic, and the y-axis in all figures except for those showing stepsizes are logarithmic.

The behavior of both the curve of $\E[\|\nabla f(\bx_k)\|^2]$ vs. number of iterations and the curve of stepsizes are similar to Figure~\ref{fig:adult}. And the behavior of the curve of optimality gap is similar to that of the curve of $\E[\|\nabla f(\bx_k)\|^2]$ vs. number of iterations.

\newpage
\begin{figure}[h]
\begin{center}
\center
\includegraphics[width=\columnwidth]{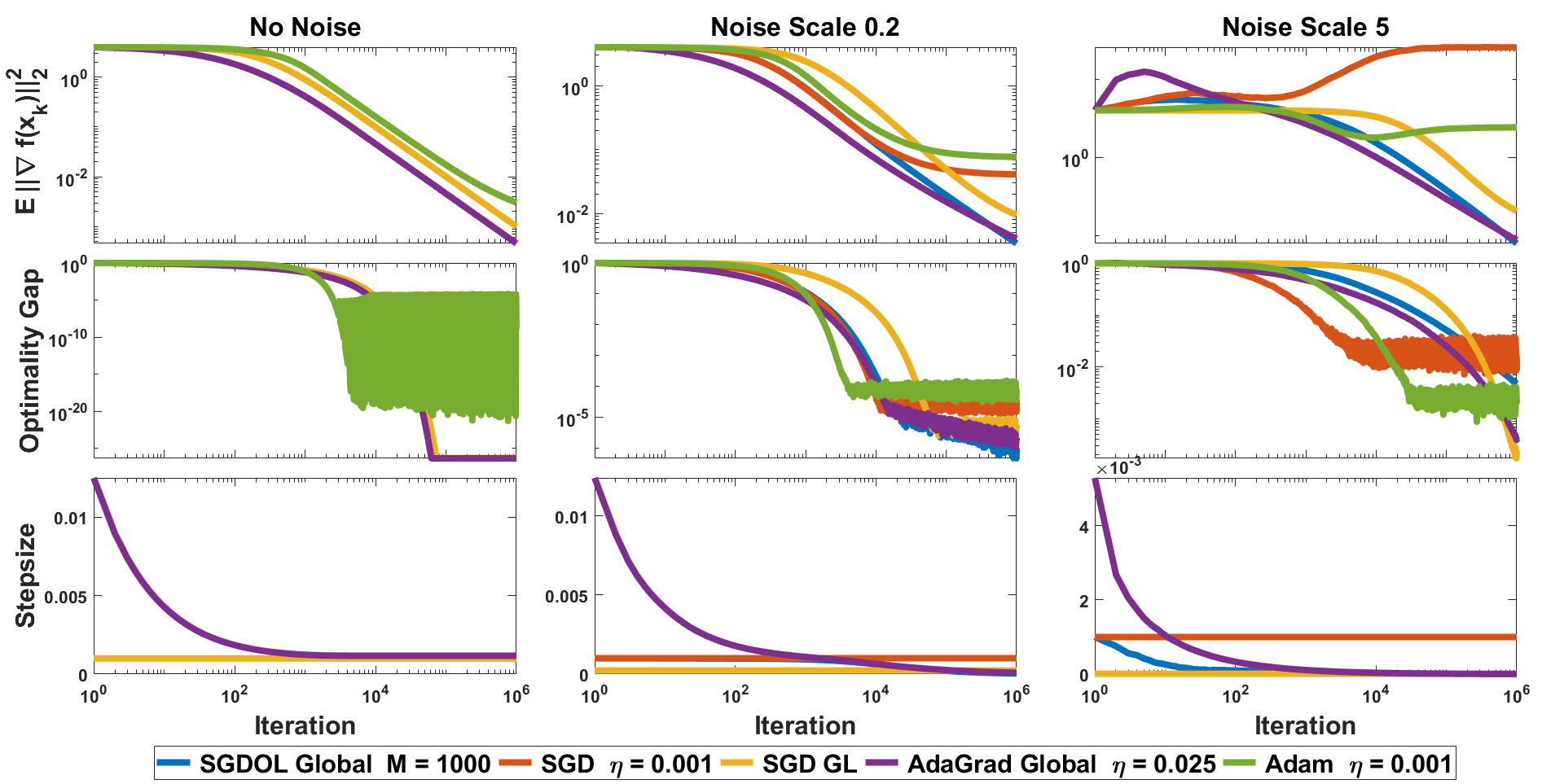}
\vspace{-1em}
\caption{Comparison of SGDOL Global, SGD, SGD GL, AdaGrad Global, and Adam on the 2D Rosenbrock function with different levels of noise.}
\label{fig:rosenmain}
\end{center}
\end{figure}

\begin{figure}[h!]
\begin{center}
\center
\includegraphics[width=\columnwidth]{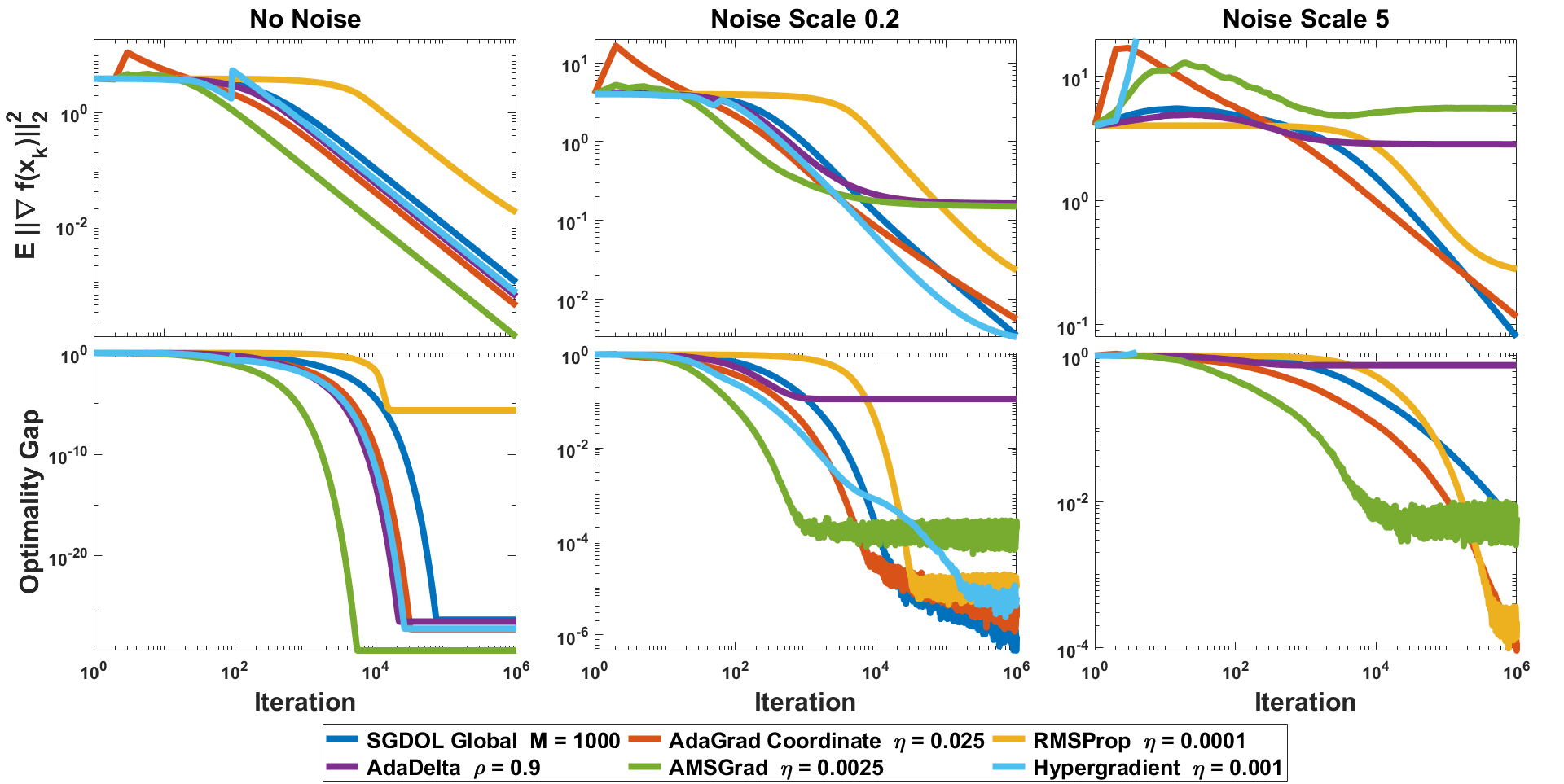}
\vspace{-1em}
\caption{Comparison of SGDOL Global, AdaGrad Coordinate, RMSProp, AdaDelta, AMSGrad, and Hypergradient on the 2D Rosenbrock function with different levels of noise.}
\label{fig:rosenother}
\end{center}
\end{figure}
\newpage
\subsection{Other Results for Fitting a Non-Linear Classification Model}
Here we show the results for comparison between SGDOL Global and other optimization algorithms listed in the above subsection but applied to the classification task introduced in Section~\ref{sec:exp}. Note that here we don't know $f^*$ so we don't report the curve of the optimality gap. The comparison shown in Figure~\ref{fig:adultother} is similar to what is reported in Figure~\ref{fig:adult}.
\begin{figure}[h!]
\begin{center}
\center
\includegraphics[width=\columnwidth]{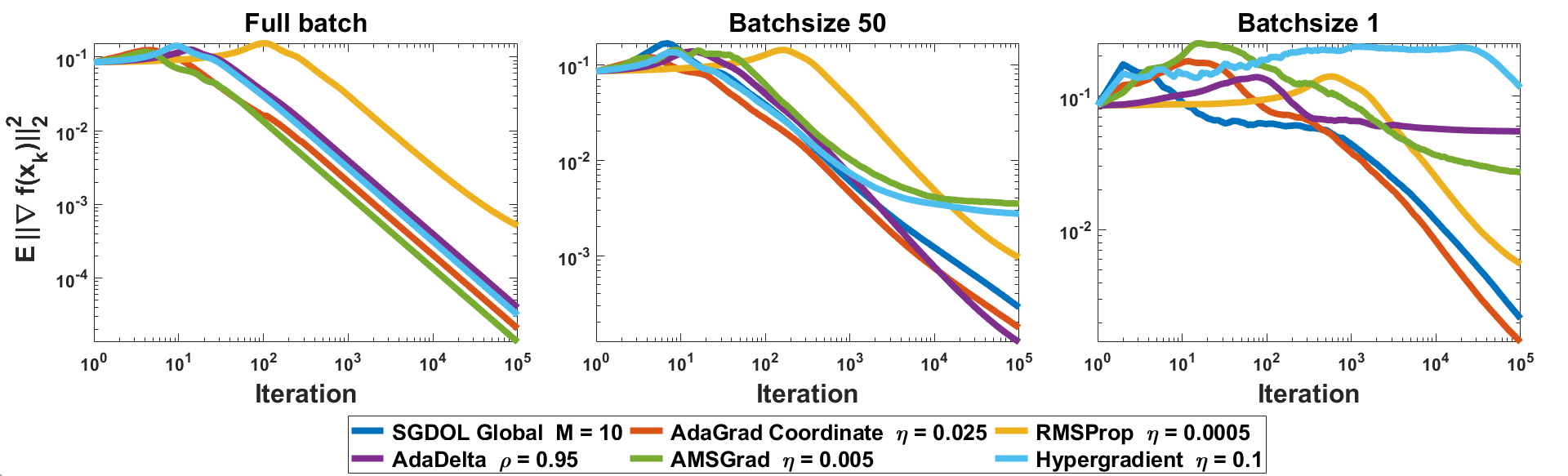}
\vspace{-1em}
\caption{Comparison of performance of SGDOL Global, AdaGrad Coordinate, RMSProp, AdaDelta, AMSGrad, and Hypergradient on a non-linear classification model with different minibatch sizes.}
\label{fig:adultother}
\end{center}
\end{figure}

\end{document}